\titlespacing{\section}{0pt}{-.3ex}{-.5ex}
\titlespacing{\subsection}{-1pt}{-1ex}{-1ex}
\titlespacing{\subsubsection}{-1pt}{-1ex}{-1ex}
\newcommand{\as}{\setlength{\abovedisplayskip}{1pt}}
\newcommand{\bs}{\setlength{\belowdisplayskip}{1pt}}
\title{The Interplay Between Stability and Regret \\ in Online Learning}
\author{
Ankan Saha
\\
Department of Computer Science\\
University of Chicago\\
\texttt{ankans@cs.uchicago.edu} \\
\And
Prateek Jain \\
Microsoft Research India\\
\texttt{prajain@microsoft.com} \\
\And
Ambuj Tewari \\
Department of Statistics\\
University of Michigan\\
\texttt{tewaria@umich.edu}
}
\begin{document}
\maketitle
\begin{abstract}
  This paper considers the stability of online learning algorithms and
  its implications for learnability (bounded regret).  We introduce a
  novel quantity called {\em forward regret} that intuitively measures
  how good an online learning algorithm is if it is allowed a one-step
  look-ahead into the future.  We show that given stability, bounded
  forward regret is equivalent to bounded regret. We also show that
  the existence of an algorithm with bounded regret implies the
  existence of a stable algorithm with bounded regret and bounded
  forward regret. The equivalence results apply to general, possibly
  non-convex problems. To the best of our knowledge, our analysis
  provides the first general connection between stability and regret
  in the online setting that is not restricted to a particular class
  of algorithms.  Our stability-regret connection provides a simple
  recipe for analyzing regret incurred by any online learning
  algorithm. Using our framework, we analyze several existing online
  learning algorithms as well as the ``approximate'' versions of
  algorithms like RDA that solve an optimization problem at each
  iteration. Our proofs are simpler than existing analysis for the
  respective algorithms, show a clear trade-off between stability and
  forward regret, and provide tighter regret bounds in some cases. 
  Furthermore, using our recipe, we
   analyze ``approximate'' versions of several algorithms such as
   follow-the-regularized-leader (FTRL) that requires solving an
   optimization problem at each step.
\end{abstract}


\vspace{-10pt}
\section{Introduction}
\label{sec:intro}

The fundamental role of stability in determining the generalization
ability of learning algorithms in the setting of iid data is now well
recognized. Moreover, our knowledge of the connection between
stability and generalization is beginning to achieve a fair degree of
maturity (see, for instance,
\cite{BouEli00,KutinN02,RakhlinMP05,ShaShaSreSri10}).  However, the
same cannot be said regarding our understanding of the role of
stability in online adversarial learning.

Recently, several results have shown connections between learnability
of a concept class and stability of its empirical risk minimizer
(ERM). Apart from theoretical interest, such insights into stability
and learnability, can potentially help in designing more practical
algorithms. For example, \cite{KutinN02} show that under certain
settings, stability is a more general characterization than
VC-dimension; good generalization performance can be guaranteed for
concept classes with stable ERM, even if its VC-dimension is infinite.

However, most of the existing implications of stability are in the
batch or i.i.d. learning setting, with only a few results in the online
adversarial setting. 
Online learning can be modeled as a sequential two-player game between
a player (learner) and an adversary where, at each step, the player
takes an action from a set and the adversary plays a loss function.
The player's loss is evaluated by applying the adversary's move to the
player's action and key quantity to control is the \emph{regret} of
the player in hindsight. Understanding stability in the online
learning setting is not only a challenging theoretical problem but is
also important from the point of view of applications. For instance,
stability allows us to derive guarantees that apply to dependent
(non-iid) data \cite{AgarwalD11} and is critical in areas such as
privacy \cite{JainKT11}.

There is a fundamental challenge in extending the connection between
stability and learnability from the iid to the online case. In the iid
setting, empirical risk minimization (ERM) serves as a canonical
learning algorithm \cite{Vap98}. Thus, given \emph{any} hypothesis
class, it is sufficient to just analyze the stability of ERM over the
class to characterize its learnability in the batch
setting. Unfortunately, no such canonical scheme is known for online
learning, making it significantly more involved to forge connections
between online learnability and stability.  We circumvent this
difficulty by studying connections between stability and regret of
\emph{arbitrary} online learning {\em algorithms}.

In this paper, we circumvent the above mentioned issue by studying
connections between stability and regret of 
learning {\em algorithms}, rather than online learnability of
individual {\em concept classes}.  in a generic sense.To this end, we
first define stability for online learning algorithms. Our definition
is essentially ``leave last one out'' stability, also considered by
\cite{RossB11}. We also define a uniform version of this stability
measure.  However, stability alone cannot guarantee bounded
regret. For example, an algorithm that always plays one fixed move is
clearly the most stable any algorithm can be. But its regret can
hardly be bounded.  Hence, an additional condition is required that
forces the algorithm to make {\em progress}. To this end, we introduce
a novel measure called {\em forward regret}: the excess loss incurred
with a look-ahead of one time step (i.e., when player makes its
$t^{\text{th}}$ move \emph{after} seeing the adversary's
$t^{\text{th}}$ move). We show fundamental results relating the three
conditions, namely {\bf online stability}, {\bf bounded forward
  regret} and {\bf bounded regret}. First, assuming stability, bounded
regret and bounded forward regret are equivalent. Second, given an
algorithm with bounded regret, we can always obtain a \emph{stable}
algorithm with bounded regret and bounded forward regret. We would
like to stress that these general results \emph{do not} rely on
convexity assumptions and are \emph{not restricted} to a particular
family of learning algorithms.  In contrast, \cite{RossB11} provides
equivalence of stability and regret for only certain families of
algorithms and concept classes.

We illustrate the usefulness of our general framework by considering
several popular online learning algorithms like Follow-The-Leader
(FTL) \cite{HazAgaKal07,CesLug06}, Follow-The-Regularized-Leader
(FTRL) \cite{Rak09,AbeRak09}, Implicit Online Learning (IOL)
\cite{KulisB10}, Regularized Dual Averaging (RDA) \cite{Xiao10} and
Composite Objective Mirror Descent (COMiD) \cite{DuchiSST10}. We
obtain regret bounds for all of them using the fundamental connections
between forward regret and stability thereby demonstrating that our
framework is not restricted to a particular class of algorithms. Our
regret analysis is arguably simpler than existing ones and, in some
cases such as IOL, provides tighter guarantees as well.


Finally, we consider ``approximate'' versions of RDA, IOL, and FTRL
algorithms where the optimization problem at each step is solved only
up to a small but non-zero additive error.  It is important to
consider such an analysis because, in practice, the optimization
problems arises at each step will not be solved to infinite
precision. For each of these three algorithms, we use our general
stability based recipe to provide regret bounds for their approximate
versions.

We introduce our setup in Section~\ref{sec:prelim}. We introduce the
online learning framework in Section~\ref{sec:setup} and review
existing work and contrast it to our work in
section~\ref{sec:related}. We introduce our three online learning
conditions and show their connections in Section~\ref{sec:method}. We
provide several illustrations of the usefulness of our conditions in
analyzing existing online algorithms in Section~\ref{sec:examples} and
finally conclude with Section~\ref{sec:conclusion}. 




\section{Bregman Divergences and Strong Convexity}
\label{sec:prelim}

Here we recall the definition of a Bregman divergence
\cite{Bre67,CenZen98} which finds use in online learning
algorithms. We also relate it to the notion of strong convexity, a key
property behind many regret bounds for online learning.
\begin{definition}
\vspace*{-5pt}
\as
\bs
  \label{def:breg_div}
  Let $R:\Ccal \to \RR$ be a strictly convex function on a
  convex set $\Ccal \subseteq \RR^d$. Also, let $R$ be differentiable
  on the relative interior of $\Ccal$, $ri(\Ccal)$, assumed to be
  nonempty. The Bregman divergence $\Dcal_{R}:\Ccal \times ri(\Ccal)
  \to \RR^+$ generated by the function $R$ is given by
  \begin{align*}\as \bs
    \Dcal_{R}(\xb,\yb) = R(\xb) - R(\yb)- \grad
      R(\yb)^{\top}(\xb - \yb)
  \end{align*}
  where $\grad R(\yb)$ is the gradient of the function
  $R$ at $\yb$.
\end{definition}

\begin{definition}\as\bs
  \label{def:strong-convex}
  A convex function $f:\RR^{d} \to \RR$ is strongly convex with
  respect to a norm $\nbr{\cdot}$ if there exists a constant $\alpha >
  0$ such that $$\as\bs D_f(\uvec, \vvec)\geq \frac{\alpha}{2}
  \|\uvec-\vvec\|^{2} \qquad \forall \ub, \vb \in \RR^d.$$
\end{definition}
$\alpha$ is called the modulus of strong convexity and $f$ is also
referred to as $\alpha$-strongly convex.

Now, we present a useful lemma characterizing optima of a strongly
convex function.
\begin{lemma}
  \label{lem:sc_property}\as\bs
  Let $f:\RR^{d} \to \RR$ be an $\alpha$-strongly convex function and
  let $\Ccal\subseteq \RR^d$ be a convex set. Let $\wvec^*\in \Ccal$
  be a minimizer of $f$ over $\Ccal$, i.e., $\wvec^*=\argmin_{\wvec\in
    \Ccal}f(\wvec)$. Then, for any $\uvec \in \Ccal$,
$$\as\bs f(\uvec)\geq f(\wvec^*)+\frac{\alpha}{2}\|\uvec-\wvec^*\|^2.$$
In particular, the minimizer is unique.
\end{lemma}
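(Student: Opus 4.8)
The plan is to read the result directly off the defining inequality of strong convexity, once the first-order term has been disposed of using the optimality of $\wvec^*$ over the convex set $\Ccal$.

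First I would instantiate Definition~\ref{def:strong-convex} at the pair $(\uvec,\wvec^*)$. Spelling out the Bregman divergence $D_f(\uvec,\wvec^*) = f(\uvec)-f(\wvec^*)-\grad f(\wvec^*)^\top(\uvec-\wvec^*)$, the strong convexity hypothesis gives
$$f(\uvec) \geq f(\wvec^*) + \grad f(\wvec^*)^\top(\uvec-\wvec^*) + \frac{\alpha}{2}\|\uvec-\wvec^*\|^2 .$$
The whole task then reduces to showing that the linear term $\grad f(\wvec^*)^\top(\uvec-\wvec^*)$ is nonnegative for every $\uvec\in\Ccal$.

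Second, I would establish this via the standard first-order optimality (variational inequality) condition for a minimizer over a convex set. Since $\Ccal$ is convex and $\wvec^*,\uvec\in\Ccal$, the segment $\wvec^*+t(\uvec-\wvec^*)$ stays in $\Ccal$ for $t\in[0,1]$; because $\wvec^*$ minimizes $f$ over $\Ccal$, the scalar function $t\mapsto f(\wvec^*+t(\uvec-\wvec^*))$ attains its minimum over $[0,1]$ at $t=0$, so its right derivative at $0$, namely $\grad f(\wvec^*)^\top(\uvec-\wvec^*)$, is $\geq 0$. This is the one substantive step and the only place where convexity of $\Ccal$ and optimality of $\wvec^*$ actually enter; substituting it into the display above immediately yields $f(\uvec)\geq f(\wvec^*)+\frac{\alpha}{2}\|\uvec-\wvec^*\|^2$.

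Finally, uniqueness is immediate: if $\uvec\in\Ccal$ is also a minimizer then $f(\uvec)=f(\wvec^*)$, so the bound forces $0\geq\frac{\alpha}{2}\|\uvec-\wvec^*\|^2$, and since $\alpha>0$ this gives $\uvec=\wvec^*$. The main obstacle is really just the variational inequality argument for the gradient term; everything surrounding it is direct substitution.
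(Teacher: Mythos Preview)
The paper does not actually include a proof of this lemma; it is stated in Section~\ref{sec:prelim} as a standard fact and then invoked later without argument. Your proof is correct and is exactly the standard one: expand the strong-convexity inequality $D_f(\uvec,\wvec^*)\geq\frac{\alpha}{2}\|\uvec-\wvec^*\|^2$, then kill the linear term via the first-order optimality condition $\grad f(\wvec^*)^\top(\uvec-\wvec^*)\geq 0$ for a minimizer over a convex set, and read off uniqueness from the resulting quadratic lower bound. There is nothing to compare against, and nothing to fix.
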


Lower bold case letters (\eg, $\wb$, $\mub$) denote vectors, $w_{i}$
denotes the $i$-th component of $\wb$.
The Euclidean dot product between $\avec$ and $\bvec$ is denoted by
$\avec^{\top}\bvec$ or $\inner{\ab}{\bb}$.  A general norm is denoted
by $\|\cdot\|$ and $\|\cdot\|_*$ refers to its dual norm. For most of
this paper, we work with arbitrary norms and we use $\|\cdot\|_p$ to
refer to a specific $\ell_p$ norm. Unless specified otherwise, $\wvec
\in \mathbb{R}^d$, $\Ccal \subset \mathbb{R}^d$ is a compact convex
set, and $\ell_t:\mathbb{R}^d\rightarrow \mathbb{R}$ is any loss
function. A function $f:\Ccal \to \RR$ is $L$-Lipschitz continuous
w.r.t. a norm $\nbr{\cdot}$ if $|f(\xb)-f(\yb)|\leq L\|\xb-\yb\|,
\forall \xb, \yb\in \Ccal$.

\section{Setup}
\label{sec:setup}

We now describe the online learning setup that we use in this
paper. Let $\Ccal\subset \RR^d$ be a fixed set and $\Lcal$ be a class
of real-valued functions over $\Ccal$. Now, consider a repeated game
of $T$ rounds played between a player/learner and an adversary.  At
every step $t$,
\begin{compactitem}
\item The player plays a point $\wb_t$ from a set $\Ccal$.
\item The adversary responds with a function $\ell_t \in \Lcal$.
\item The player suffers loss $\ell_t(\wb_t)$.
\end{compactitem}
The quantity of interest in online learning is the {\bf regret} which
measures how good the player performs compared to the best fixed move
in hindsight (i.e. knowing all the moves of the adversary in
advance). Regret is defined below in \eqref{eq:bdd_regret}.
The goal in online learning is to minimize the regret regardless of
the function sequence played by the adversary.  Online Convex
Programming (OCP) \cite{Zin03} (respectively Online Linear Programming
(OLP)) is a special case of the online learning game above where the
set $\Ccal$ is a compact convex set and $\Lcal$ is a class of convex
(respectively linear) functions defined on $\Ccal$.


\section{Related Work} 
\label{sec:related}

For a general introduction to online learning and
descriptions of standard algorithms, see~\cite{CesLug06}. In the iid
setting, stability is investigated from various points of view in
\cite{BouEli00,KutinN02,RakhlinMP05,ShaShaSreSri10}.  There are only a
few papers dealing with stability in the online setting.  Recently,
\cite{RossB11} defined what we call Last Leave-One-Out (LLOO)
stability and showed that for FTRL or MD type methods, stable online
learning algorithms have bounded regret. In contrast, we distill out
the ``progress'' in terms of forward regret condition and show a much
more general connection between stability, regret and forward
regret. Unlike \cite{RossB11}, our method is extremely generic and
does not need to assume any specific algorithmic form or even any
specific function class (like convex functions). We also prove that
most existing families of online learning algorithms are in fact {\em
  stable} in our sense and using our connections provide simple regret
bound analysis for them.  Another related work \cite{PogVoiRos11}
considers an online algorithm, namely stochastic gradient descent
(SGD) algorithm, in the iid setting where each function $\ell_t$ is
samples points in an iid fashion from some distribution. In this
setting, \cite{PogVoiRos11} defines a new notion of online stability
which is motivated by uniform stability \cite{BouEli00}. The paper
shows that SGD satisfies the new notion of stability and provides
consistency guarantees as well. In contrast, our fundamental results
connecting stability with regret hold for any algorithm and for any
set of adversary moves $\cbr{\ell_t}$, not just those sampled iid from
a distribution.

A general class of online learning algorithms are referred to as
Follow-The-Leader (FTL) \cite{CesLug06} algorithms. At step $t+1$,
this algorithm chooses the element of $\Ccal$ which minimizes the
sum of the functions played by the adversary up to that
point: \begin{align} \label{eq:FTL} \wb_{t+1} = \argmin_{\wb \in
\Ccal} \sum_{i=1}^{t}f_i(\wb) \ .  \end{align} It can be shown that
surprisingly simple algorithm achieves $O(\log T)$ regret when the
adversary is restricted to playing strongly convex functions
\cite{HazAgaKal07}.

A generalization of FTL is by adding a regularizer which results in
the Follow-The-Regularized-Leader (FTRL) algorithm \cite{Rak09,
  AbeRak09}. In this case the update is given by
\begin{align}
  \label{eq:FTRL}
  \wb_{t+1} = \argmin_{\wb \in \Ccal} \sum_{i=1}^{t}\eta f_i(\wb) +
  R(\wb)
\end{align}
Typically, $R$ is a strongly convex regularizer with respect to the
appropriate norm and $\eta$ is a tradeoff parameter. Another way of
describing FTRL algorithms is using Bregman divergences
\cite{Rak09}. In particular, by defining $\phi_0(\wb) = R(\wb)$ and
$\phi_t(\wb) = \phi_{t-1}(\wb) + \eta f_t(\wb)$, we can write FTRL
update in an equivalent form:
\begin{align*}
  \wb_{t+1} = \argmin_{\wb \in \Ccal} \eta f_{t}(\wb) + \Dcal_{\phi_{t-1}}(\wbtil)
\end{align*}
where $\wbtil$ is the corresponding unconstrained minimizer.

Another class of algorithms is the proximal type algorithms also
called Mirror Descent(MD) methods \cite{NemYud83}, that typically tries to
find an iterate close to the previous iterate but also minimizes the
current loss function and obtains same rates of regret as FTRL. Similar
to FTRL, such algorithms also achieves $O(\sqrt{T})$ regret for
general convex functions and $O(\ln T)$ regret for strongly convex
functions.
It is interesting to note that Zinkevich's algorithm \cite{Zin03} is
just a special case of mirror descent with the Euclidean norm and
$R(\wb) = \frac{1}{2}\nbr{\wb}_2^2$ and is similar to a stochastic
gradient descent update \cite{BotBou07}.

While mirror descent and FTRL look fundamentally different
algorithms and were considered to be two different ends of the
spectrum for online learning algorithms \cite{ShwartzK08}, a recent
paper \cite{McM11} shows equivalence between different mirror descent
algorithms and corresponding FTRL counterparts. In particular they
show that the FOBOS mirror descent algorithm \cite{DucSin09} is
conceptually similar to Regularized Dual Averaging (RDA) \cite{Xiao10}
with minor differences emanating out of usage of proximal strongly
convex regularizer and handling of arbitrary nonsmooth regularization
like the $\ell_1$ norm. These difference result in different sparsity
properties of the two algorithms.

\section{Three conditions for online learning}
\label{sec:method}
In this section, we formally define our stability notion as well as
introduce our bounded forward regret condition. We show that given
stability, bounded regret and bounded forward regret are
equivalent. Moreover, any algorithm with bounded regret can be
converted into a {\em stable} algorithm with bounded regret and
forward regret.  Finally, we consider several existing OCP algorithms
and illustrate that our forward regret and stability conditions can be
used to provide a simple recipe for proving regret. For each of the
algorithms, our novel analysis simplifies existing analysis
significantly and in some cases also tightens the analysis.

We first define the following three quantities for any online
learning algorithm:
\begin{compactitem}
\item {\bf Online Stability}: 
  Intuitively, an online algorithm $\Acal$ is defined to be stable if
  the {\em consecutive} iterates generated by $\Acal$ are not too far
  away from each other. Formally, if $\wb_t$ is the point selected by
  $\Acal$ at the $t$-th step, then the (cumulative) online
  stability of $\Acal$ is given by 
  \begin{align}\as\bs
    \label{eq:stability}
    \Scal_\Acal(T)=\sum_{t=1}^T \nbr{\wb_t-\wb_{t+1}}.
  \end{align}
  Now, if $\Scal_\Acal(T) = o(T)$, then we say that $\Acal$ is online
  stable. 
  of stability is closely related to \cite{RossB11} (See Definition
  17). Next, we define a stronger definition of stability, which we
  call {\bf Uniform Stability}:
  \begin{align}\as\bs
    \label{eq:stability1}
    \Ucal\Scal_\Acal(t)= \nbr{\wb_t-\wb_{t+1}}.
  \end{align}
  If $\Ucal\Scal_\Acal(T) = o(1)$, then $\Acal$ is defined to be
  uniformly stable. Clearly, if $\Acal$ is uniformly stable then it is
  (cumulatively) stable as well. In section~\ref{sec:examples},
  we show that most of the existing online learning methods are
  actually uniformly stable. Interestingly, for COMiD (see
  section~\ref{sec:comid_example}), while proving cumulative stability
  is relatively straightforward, one can show that uniform stability
  need not hold in general.
\item {\bf Forward Regret}: Forward regret is the hypothetical regret
  incurred by $\Acal$ if it had access to the next move that the
  adversary was going to make.
  Note that forward regret cannot actually be attained by an algorithm
  since it depends on seeing one step into the future. Formally, 
  \begin{align}\as\bs
    \label{eq:fwd_regret}
    \Fcal\Rcal_\Acal(T)=\sum_{t=1}^T \sbr{\ell_t(\wb_{t+1}) -
      \ell_t(\wb^*)},
  \end{align}
  where $\wb^* = \argmin_{\wb \in \Ccal} \sum_{t=1}^T \ell_t(\wb)$.
  We define $\Acal$ to have bounded (or vanishing) forward regret if
  $\Fcal\Rcal_\Acal(T)$ $= o(T)$. Note that if the online algorithms
  are randomized, we can replace the three quantities with their
  expected counterparts and all the bounds in the paper still hold.
\item {\bf Regret}: Regret is a standard notion in online learning
  that measures how good the steps of the algorithm $\Acal$ are
  compared to the best fixed point in hindsight: 
  \begin{align}\as\bs
    \label{eq:bdd_regret}
    \Rcal_\Acal(T)=\sum_{t=1}^T \sbr{\ell_t(\wb_{t}) -
      \ell_t(\wb^*)}.
    \as\bs
  \end{align}
  Here again, if $\Rcal_\Acal(T) = o(T)$, then $\Acal$ is
  said to have bounded (or vanishing) regret.
\end{compactitem}
These three concepts, besides being important in their own right, are
also intimately related. In particular, in the next section we show
that given any two of these conditions, the third condition holds.

\subsection{Connections between the three conditions}
\label{sec:equivalence}
In this section, we show that the three conditions (i.e., bounded
stability, bounded forward regret and bounded regret) defined in the
previous section are closely related in the sense that given any two
of the conditions, the third condition follows directly. For our
claim, we first show that {\em assuming stability},
\begin{align*}\as\bs
  \text{bounded forward regret} \iff \text{bounded regret} \ .
\end{align*}
We then prove that bounded regret can be shown to
exhibit stability, albeit with worse rates of regret. Our claims are
formalized in the following theorems.
\begin{theorem}
  \label{thm:equivalence}
  Assume an online algorithm $\Acal$ satisfies the condition of online
  stability \eqref{eq:stability} where the function played by the
  adversary at each step is $L$-Lipschitz. Then, we have,
  \begin{align}\as\bs
    \label{eq:bound_regret}
    &\Rcal_{\Acal}(T)\leq L\cdot \Scal_{\Acal}(T)+\Fcal\Rcal_{\Acal}(T),\\
    \notag &\Fcal\Rcal_{\Acal}(T)\leq L\cdot
    \Scal_{\Acal}(T)+\Rcal_{\Acal}(T).
  \end{align}
  Therefore, assuming online stability of $\Acal$, bounded forward
  regret and bounded regret are equivalent conditions.
\end{theorem}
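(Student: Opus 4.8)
The plan is to exploit the fact that regret $\Rcal_{\Acal}(T)$ and forward regret $\Fcal\Rcal_{\Acal}(T)$ differ only in the argument at which each loss $\ell_t$ is evaluated: regret uses the played iterate $\wb_t$, while forward regret uses the look-ahead iterate $\wb_{t+1}$, and both subtract the \emph{same} comparator $\ell_t(\wb^*)$ at every step. First I would form the difference of the two sums, so that the comparator terms cancel exactly and leave
\begin{align*}
\Rcal_{\Acal}(T) - \Fcal\Rcal_{\Acal}(T) = \sum_{t=1}^T \sbr{\ell_t(\wb_t) - \ell_t(\wb_{t+1})}.
\end{align*}

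The next step is to control each summand by the stability of the algorithm. Since every $\ell_t$ is $L$-Lipschitz with respect to $\nbr{\cdot}$, we have $|\ell_t(\wb_t) - \ell_t(\wb_{t+1})| \leq L \nbr{\wb_t - \wb_{t+1}}$ for each $t$. Summing over $t$ and recalling $\Scal_{\Acal}(T) = \sum_{t=1}^T \nbr{\wb_t - \wb_{t+1}}$, both stated inequalities follow at once: bounding the displayed difference from above yields $\Rcal_{\Acal}(T) \leq L \cdot \Scal_{\Acal}(T) + \Fcal\Rcal_{\Acal}(T)$, while bounding its negation from above (equivalently, swapping the roles of $\wb_t$ and $\wb_{t+1}$) yields $\Fcal\Rcal_{\Acal}(T) \leq L \cdot \Scal_{\Acal}(T) + \Rcal_{\Acal}(T)$.

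Finally, to extract the equivalence I would invoke online stability. By assumption $\Scal_{\Acal}(T) = o(T)$, hence $L \cdot \Scal_{\Acal}(T) = o(T)$. If $\Rcal_{\Acal}(T) = o(T)$, the second inequality forces $\Fcal\Rcal_{\Acal}(T) = o(T)$; conversely, if $\Fcal\Rcal_{\Acal}(T) = o(T)$, the first inequality forces $\Rcal_{\Acal}(T) = o(T)$. There is no real obstacle here, and I would say so plainly: the whole argument is a telescoping of the comparator terms followed by a term-by-term Lipschitz estimate, the only hypothesis used being that each individual $\ell_t$ is $L$-Lipschitz. The genuine content lies in the \emph{definitions} of the three quantities — in particular in choosing the look-ahead iterate $\wb_{t+1}$ so that the sole discrepancy between regret and forward regret is the consecutive-iterate displacement $\nbr{\wb_t - \wb_{t+1}}$ measured by $\Scal_{\Acal}(T)$.
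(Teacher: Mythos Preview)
Your proposal is correct and matches the paper's own proof essentially line for line: the paper also adds and subtracts $\ell_t(\wb_{t+1})$ inside the regret sum, bounds $\sum_t[\ell_t(\wb_t)-\ell_t(\wb_{t+1})]$ by $L\cdot\Scal_{\Acal}(T)$ via Lipschitz continuity, and remarks that the reverse inequality follows identically. There is nothing to add.
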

\begin{proof}
  We first assume that $\Acal$ has online stability and bounded
  forward regret. We have
  \begin{align*}\as\bs
    \sum_{t=1}^T \sbr{\ell_t(\wb_{t}) - \ell_t(\wb^*)} &=
    \sum_{t=1}^T\sbr{\ell_t(\wb_t) - \ell_t(\wb_{t+1})} + \sum_{t=1}^T
    \sbr{\ell_t(\wb_{t+1}) - \ell_t(\wb^*)} \\\as\bs
    &\hspace*{-25pt}\leq \sum_{t=1}^T L\nbr{\wb_t - \wb_{t+1}} +
    \Fcal\Rcal(T) \leq L\cdot \Scal(T) + \Fcal\Rcal(T)= o(T),
  \end{align*}
  where the second last inequality follows by Lipschitz continuity of
  $\ell_t$ and the last equality holds as both
  $\Scal(T),\ \Fcal\Rcal(T) = o(T)$.  Hence, $\Acal$ has bounded
  regret. The proof in the reverse direction follows
  identically. \qedhere
\end{proof}
To complete the 
picture regarding the connections between the three conditions, we now
prove the following theorem.
\begin{theorem}
  \label{thm:eq_regret_stability}
  Let $\Ccal$ be a fixed set of bounded diameter $D$ from which a
  learner $\Acal$ selects a point at each step of online learning. Let
  $\Fcal$ be the class of $L$-Lipschitz functions from which the
  adversary plays a function at each step.  Also, let $\Acal$ have
  bounded regret. Then, there exists a stable algorithm with
  bounded regret and forward regret.
\end{theorem}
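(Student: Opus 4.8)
The plan is to reduce the problem to constructing a \emph{stable} algorithm with bounded regret, and then invoke Theorem~\ref{thm:equivalence}. Indeed, if I can exhibit an algorithm $\Bcal$ that is online stable ($\Scal_\Bcal(T) = o(T)$) and has bounded regret ($\Rcal_\Bcal(T) = o(T)$), then Theorem~\ref{thm:equivalence} immediately gives $\Fcal\Rcal_\Bcal(T) \le L\cdot \Scal_\Bcal(T) + \Rcal_\Bcal(T) = o(T)$, so $\Bcal$ automatically has bounded forward regret. Thus the entire burden is to stabilize the given $\Acal$ while preserving sublinear regret, and crucially this must be done without any convexity assumption.

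First I would build $\Bcal$ by a blocking-plus-simulation construction. Partition the $T$ rounds into $m = T/B$ consecutive blocks of length $B$ (a final partial block is handled trivially). Inside block $k$, $\Bcal$ commits to a single point $\vvec_k$ for all $B$ rounds; once the block ends it forms the averaged loss $g_k = \frac{1}{B}\sum_{t \in \text{block }k} \ell_t$ and feeds $g_k$ as the $k$-th loss to an internal simulation of $\Acal$, whose next iterate becomes $\vvec_{k+1}$. Since each $\ell_t$ is $L$-Lipschitz, each average $g_k$ is again $L$-Lipschitz and hence a legal move in $\Fcal$, so the simulated $\Acal$ runs on a genuine length-$m$ instance and its regret guarantee applies verbatim. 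The key observation is that the regret of $\Bcal$ telescopes exactly into that of the simulation: writing $\wb^* = \argmin_{\wb}\sum_t \ell_t(\wb) = \argmin_{\wb}\sum_k g_k(\wb)$, one has $\Rcal_\Bcal(T) = \sum_k \sum_{t \in \text{block }k}[\ell_t(\vvec_k) - \ell_t(\wb^*)] = B\sum_k [g_k(\vvec_k) - g_k(\wb^*)] = B\cdot \Rcal_\Acal(m)$, the benchmark being identical because averaging preserves the minimizer of the aggregate loss.

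It then remains to bound the two quantities and balance $B$. For stability, $\Bcal$ never moves within a block, so only the $m$ block boundaries contribute, each by at most the diameter $D$, giving $\Scal_\Bcal(T) \le m D = DT/B$. For regret, $\Rcal_\Bcal(T) = B\cdot\Rcal_\Acal(T/B)$, so $\Rcal_\Bcal(T)/T = \Rcal_\Acal(m)/m \to 0$ as long as $m = T/B \to \infty$. Choosing $B = \lceil\sqrt{T}\rceil$ forces both $B\to\infty$ (which makes $DT/B = o(T)$) and $m\to\infty$ (which makes the regret sublinear), yielding $\Scal_\Bcal(T) \le D\sqrt{T} = o(T)$ and $\Rcal_\Bcal(T) = o(T)$ simultaneously. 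The main subtlety — and the reason the construction averages \emph{losses} rather than iterates — is precisely the absence of convexity: one cannot average $\Acal$'s points and appeal to Jensen, so instead the smoothing is pushed onto the loss sequence handed to the simulation, where $\Acal$'s own regret bound does all the work. A secondary point I would verify is that an adaptive adversary merely determines \emph{some} realized sequence $\{g_k\}$, against which $\Acal$'s worst-case guarantee still holds, so adaptivity introduces no additional difficulty.
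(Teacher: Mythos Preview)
Your proposal is correct and follows essentially the same approach as the paper: the paper builds exactly this blocking algorithm that holds a single iterate per batch, feeds the batch-averaged loss to $\Acal$, bounds stability by $DT/B$ and regret by $B\cdot\Rcal_\Acal(T/B)$ (plus an $LDB$ term for the final partial batch), picks $B=\sqrt{T}$, and then appeals to Theorem~\ref{thm:equivalence} for forward regret. Your additional remarks on why one averages losses rather than iterates and on adaptive adversaries are apt but not needed for the argument.
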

\begin{proof}
Intuitively, our proof proceeds by constructing an alternative stable
algorithm that averages a batch of loss functions and feeds it into
the ``unstable'' but bounded regret algorithm $\Acal$. We then show
bounded regret and forward regret of this new algorithm. Note that our
proof strategy is inspired by the proof of Lemma 20 in
\cite{ShaShaSreSri10} that shows stability to be a necessary condition
for learnability in batch setting.
 
Formally, given the algorithm $\Acal$, we construct a new algorithm
$\Acal '$ in the following way. We divide the set of points into
batches of $B$ and $\Acal'$ repeats the same point in an entire
batch. At the end of the batch, it feeds the average of the functions
in the batch to $\Acal$ to get its next move.  It now sticks to this
new point for the next $B$ time steps before repeating the process all
over. In picture,
\begin{align*}\as\bs
  \underbrace{\Acal' \text{ sees}:}_{\Acal \text{ sees}:} \qquad
  \underbrace{\ell_1,\ell_2,\hdots, \ell_B}_{B \cdot g_1},
  \underbrace{\ell_{B+1},\hdots, \ell_{2B}}_{B \cdot g_2},\underbrace{\hdots}_{\hdots}
\end{align*}
Note that the function $g_t$, being an average of Lipschitz
functions, is itself Lipschitz.  Denote the elements generated by
$\Acal'$ as $\wb_1',\ldots,\wb_T'$ and those by $\Acal$ as $\wb_1,
\ldots, \wb_{\lfloor T/B\rfloor}$. Note that there are only $\lfloor
T/B \rfloor$ distinct elements $\wb_1,\ldots,\wb_{\lfloor T/B
  \rfloor}$ in this sequence: viz.  the elements generated by
$\Acal$ in response to $g_1,\ldots,g_{\lfloor T/B \rfloor}$. The
stability analysis of $\Acal '$ now proceeds as follows
\vspace*{-5pt}
\begin{align*}\as\bs
  \sum_{t=1}^T\nbr{\wb_t ' - \wb_{t+1} '} = \sum_{t=1}^{\lfloor T/B
    \rfloor}\nbr{\wb_{(t-1)B+1}' - \wb_{tB+1}'} =
  \sum_{t=1}^{\lfloor T/B \rfloor} \nbr{ \wb_t - \wb_{t+1} } \leq
  \frac{T}{B}D = o(T),\as\bs
\end{align*}
for the choice $B = O(\sqrt{T})$ in particular. This proves that
$\Acal '$ is stable. 

In order to show that $\Acal '$ has bounded regret, we consider
\begin{align*}\as\bs
  \hspace*{-13pt}\sum_{t=1}^T \rbr{\ell_t(\wb_t ') - \ell_t(\wb^*)}
  &\leq \sum_{t=1}^{B\lfloor T/B \rfloor} \rbr{\ell_t(\wb_t') -
    \ell_t(\wb^*)} + L\cdot D\cdot B = \sum_{i=1}^{\lfloor T/B \rfloor}
 \sum_{t=(i-1)B+1}^{iB} \rbr{\ell_t(\wb_t') - \ell_t(\wb^*)} +LDB \\ 
 &= B \sum_{i=1}^{\lfloor T/B \rfloor} \rbr{g_i(\wb_i) - g_i(\wb^*)} +
 L\cdot D\cdot B \leq B \cdot\Rcal_\Acal(\lfloor T/B \rfloor) + L\cdot
 D\cdot B,
\end{align*}
where $\Rcal_\Acal(T) = o(T)$ as $\Acal$ has bounded regret. The
last term in the first inequality is an upper bound on the regret
due to the last batch of functions (maximally $B$ in
number). Selecting $B = \sqrt{T}$, we get $\Rcal_{\Acal}(\lfloor
T/B \rfloor) = o(\sqrt{T})$ and hence the above bound is $o(T)$,
i.e, $\Acal'$ has bounded regret.
\end{proof}


  Thus we show that given any algorithm with bounded regret, we can
  convert it into another online stable algorithm with bounded regret
  which also implies bounded forward regret using Theorem
  \ref{thm:equivalence}.



\section{Unified analysis of online algorithms}
\label{sec:examples}
In this section we present examples where existing online learning
algorithms can be analyzed through our stability and forward regret
conditions and hence lead to regret bounds directly (see
Theorem~\ref{thm:equivalence}). These examples illustrate that the
stability and forward regret conditions are critical to regret
analysis and in fact provide a fairly straightforward recipe for
regret analysis of online learning algorithms. Note that, unlike the
general results of section \ref{sec:method}, here we will make
convexity assumptions on $\Ccal$ and $\ell_t$. One of the major
contributions of this paper is that our analysis significantly
simplifies as well as tightens up analysis for existing methods like
IOL \cite{KulisB10}.  

Before delving into the technical detials, we provide a brief generic
sketch of the regret analysis of all the algorithms. 

  For each of the regret analyses, initially we bound the stability
  $\sum_t\nbr{\wb_t -\wb_{t+1}}$ in terms of the learning rate $\eta$
  and the Lipschitz coefficient of $\ell_t$, $L$. The bounds on
  stability are generally obtained by exploiting the optimality of
  $\wb_{t+1}$ at iteration $(t+1)$, the lipschitz continuity of
  $\ell_t$ and the strong convexity of the regularizer $R$ (for the
  algorithms involving regularization). For the case of IOL,
  $\nbr{\wb_t -\wb_{t+1}} \leq 2L\eta_t$, which makes the stability
  bounded by $2L\sum_t\eta_t$.
  
  For FTL, forward regret is non positive by definition of the FTL
  updates. For all the other algorithms, the bounds on the forward
  regret follow by again using the optimality of $\wb_{t+1}$ at
  $(t+1)^{th}$ iteration and comparing the corresponding objective at
  the final minimizer $\wb^*$. This generally results in a telescoping
  sum, upper bounding the forward regret in terms of the regularizer
  $R$ (or the bregman divergence $\Dcal_R$) evaluated at the extreme
  iterates $\wb_T$ and $\wb_1$ with all the other terms canceling out
  by appropriately choosing $\eta_t$. In particular, for the case of
  IOL, the forward regret is bounded by
  $\frac{1}{\eta_1}\Dcal_R(\wvec^*,\wvec_1) + \sum_{t=2}^T
  \left(\frac{1}{\eta_{t}}-\frac{1}{\eta_{t-1}}-\alpha\right)\Dcal_R(\wvec^*,\wvec_{t})$.
 
Finally bounds on the regret are obtained by using equation
\eqref{eq:bound_regret} while the optimum dependence on $T$ are
obtained by trading off the step size $\eta_t$ in the corresponding
inequality. Summation over appropriate $\eta_t$ gives us $O(\log T)$
rates of regret for strongly convex $\ell_t$ and $O(\sqrt{T})$ rates
of regret for general convex lipschitz $\ell_t$ as is common in the
literature.

\section{Examples}
\label{sec:app_examples}
\subsection{Follow The Leader (FTL)}
\label{sec:FTL_example}
Follow the leader(FTL) is a popular method for OCP when the provided
functions are strongly convex. At the $t$-th step FTL chooses
$\wvec_{t+1}\in \mathcal{C}$ to be the element that minimizes the
total loss up to that step, \ie,
\begin{equation}\as\bs
  \label{eq:ftl}
  \text{FTL}:\qquad  \wvec_{t+1}=\argmin_{\wvec\in \Ccal}\sum_{\tau=1}^t \ell_\tau(\wvec). 
\end{equation}
The FTL method was analyzed in \cite{CesLug06} and \cite{ShwartzK08}
for the case when each loss function $\ell$ is at least
$\alpha$-strongly convex. Here, using our forward regret and stability
conditions, we provide a significantly simpler analysis with similar
regret bounds. It should be noted that our analysis is a
generalization of the analysis in \cite[Section 3.2]{CesLug06} from
strongly convex functions w.r.t. $L_2$ norm to strongly convex
functions w.r.t. arbitrary norm. 
\begin{theorem}\as\bs
  Let each loss function $\ell_t$ be $\alpha$-strongly convex and
  $L$-Lipschitz continuous. Then, the regret incurred by FTL algorithm
  (see \eqref{eq:ftl}) is bounded by:
$$\as\bs\Rcal_{\text{FTL}}(T)\leq \frac{2L^2}{\alpha}(1+\ln T).$$
\end{theorem}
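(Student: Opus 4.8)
The plan is to apply the stability–regret decomposition of Theorem~\ref{thm:equivalence}, namely $\Rcal_{\text{FTL}}(T)\le L\cdot\Scal_{\text{FTL}}(T)+\Fcal\Rcal_{\text{FTL}}(T)$. The inequality \eqref{eq:bound_regret} only uses Lipschitz continuity of each $\ell_t$, and FTL will turn out to be stable, so the theorem applies. Since I will show that FTL has non-positive forward regret, the whole bound collapses to controlling the stability term $\Scal_{\text{FTL}}(T)=\sum_{t=1}^T\nbr{\wb_t-\wb_{t+1}}$, and the result will drop out of a harmonic-series estimate.

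First I would dispatch the forward regret by a ``be-the-leader'' argument: I claim $\sum_{t=1}^T \ell_t(\wb_{t+1})\le\sum_{t=1}^T\ell_t(\wb^*)$, so that $\Fcal\Rcal_{\text{FTL}}(T)\le 0$. This follows by induction on $T$. The inductive step adds $\ell_T(\wb_{T+1})$ to both sides of the hypothesis and then invokes the defining optimality \eqref{eq:ftl} of $\wb_T$ as the minimizer of $\sum_{t=1}^{T-1}\ell_t$, which gives $\sum_{t=1}^{T-1}\ell_t(\wb_T)\le\sum_{t=1}^{T-1}\ell_t(\wb_{T+1})$; since $\wb_{T+1}=\wb^*$ by the definition of $\wb^*$ in \eqref{eq:fwd_regret}, the claim closes.

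The heart of the argument is the per-step stability bound. Write $F_t(\wb):=\sum_{\tau=1}^t\ell_\tau(\wb)$, which is $t\alpha$-strongly convex and is minimized over $\Ccal$ at $\wb_{t+1}$; likewise $F_{t-1}$ is $(t-1)\alpha$-strongly convex and minimized at $\wb_t$. Applying Lemma~\ref{lem:sc_property} to each of these two minimizers and adding the resulting inequalities, the $F_{t-1}$ and $F_t$ values at $\wb_t$ and $\wb_{t+1}$ recombine (using $F_t-F_{t-1}=\ell_t$) into a single difference $\ell_t(\wb_t)-\ell_t(\wb_{t+1})$, yielding
\begin{align*}
  \ell_t(\wb_t)-\ell_t(\wb_{t+1}) \;\ge\; \frac{(2t-1)\alpha}{2}\,\nbr{\wb_t-\wb_{t+1}}^2 .
\end{align*}
Bounding the left-hand side by $L\nbr{\wb_t-\wb_{t+1}}$ via Lipschitz continuity and cancelling one factor of $\nbr{\wb_t-\wb_{t+1}}$ gives $\nbr{\wb_t-\wb_{t+1}}\le \frac{2L}{(2t-1)\alpha}$.

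Finally I would assemble the pieces. Summing the per-step bound and using $\frac{1}{2t-1}\le\frac{1}{t}$ for $t\ge 1$ together with the harmonic estimate $\sum_{t=1}^T\frac{1}{t}\le 1+\ln T$ gives $\Scal_{\text{FTL}}(T)\le \frac{2L}{\alpha}(1+\ln T)$. Substituting this and $\Fcal\Rcal_{\text{FTL}}(T)\le 0$ into \eqref{eq:bound_regret} yields $\Rcal_{\text{FTL}}(T)\le L\cdot\Scal_{\text{FTL}}(T)\le \frac{2L^2}{\alpha}(1+\ln T)$, as claimed. I expect the stability step to be the main obstacle: one must track the two different moduli $t\alpha$ and $(t-1)\alpha$ and notice that summing the two Lemma~\ref{lem:sc_property} inequalities telescopes the $F$-terms into the lone $\ell_t$ difference. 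Obtaining the coefficient $(2t-1)$ there — rather than a cruder $t$ from using only one of the two inequalities — is exactly what makes the final constant come out as stated.
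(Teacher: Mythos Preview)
Your proposal is correct and follows essentially the same approach as the paper: the paper also bounds stability by adding the two Lemma~\ref{lem:sc_property} inequalities for $F_t$ and $F_{t-1}$ to get $\ell_t(\wb_t)-\ell_t(\wb_{t+1})\ge\frac{(2t-1)\alpha}{2}\nbr{\wb_t-\wb_{t+1}}^2$, proves $\Fcal\Rcal_{\text{FTL}}(T)\le 0$ by the same be-the-leader unrolling (phrased as a downward recursion rather than induction), and combines them via Theorem~\ref{thm:equivalence}. The only cosmetic difference is that the paper writes the harmonic bound directly as $\sum_t\frac{2}{2t-1}\le 2(1+\ln T)$ without the intermediate $\frac{1}{2t-1}\le\frac{1}{t}$ step.
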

\begin{proof}
  Our proof follows the simple recipe of computing stability as well as
  forward regret bound.

  {\bf Stability}: Using strong convexity, Lemma \ref{lem:sc_property}
  and the fact $\wvec_{t+1}$ is the optimum of \eqref{eq:ftl},
  \begin{equation}\as\bs
    \label{eq:ftl_optima}
    \sum_{\tau=1}^t \ell_\tau(\wvec_t) \geq \sum_{\tau=1}^t \ell_\tau(\wvec_{t+1})+\frac{t\alpha}{2}\|\wvec_{t}-\wvec_{t+1}\|^2.
  \end{equation}
  Similarly, using optimality of $\wvec_t$ for the $t-1$-th step:
  \begin{equation}\as\bs
    \label{eq:ftl_optima1}
    \hspace*{-10pt}\sum_{\tau=1}^{t-1} \ell_\tau(\wvec_{t+1}) \geq \sum_{\tau=1}^{t-1} \ell_\tau(\wvec_{t})+\frac{(t-1)\alpha}{2}\|\wvec_{t}-\wvec_{t+1}\|^2.
  \end{equation}
  Adding \eqref{eq:ftl_optima} and \eqref{eq:ftl_optima1}, and by
  using Lipschitz continuity of $\ell_t$ we get:
  \begin{align}\as\bs
    \ell_t(\wvec_{t}) - \ell_t(\wvec_{t+1})&\geq (t-1/2)\alpha\|\wvec_{t}-\wvec_{t+1}\|^2,\nonumber \\
    \implies \qquad \frac{L}{(t-1/2)\alpha}&\geq \|\wvec_{t}-\wvec_{t+1}\|. \as\bs
    \label{eq:ftl_optima2}
  \end{align}
  Using \eqref{eq:ftl_optima2}, we get:
  \begin{align}\as\bs\label{eq:ftl_optima3}
   \hspace*{-10pt} \sum_{t=1}^T \|\wvec_{t}-\wvec_{t+1}\| & \leq \sum_{t=1}^T\frac{2L}{(2t-1)\alpha} \leq  \frac{2L}{\alpha}(1 + \ln T). \as\bs
  \end{align}
  Hence,
\begin{equation}\as\bs
  \label{eq:ftl_stab}
  \Scal_{\text{FTL}}(T)\leq \frac{2L}{\alpha}(1+\ln T). 
\end{equation}
{\bf Forward Regret}: Using optimality of $\wvec_{t+1}$ for $t$-th step:
\begin{equation}\as\bs
  \label{eq:ftl_fw1}
  \sum_{t=1}^T\ell_t(\wvec^*) \geq \sum_{t=1}^{T}\ell_t(\wvec_{T+1}).
\end{equation}
Next using \eqref{eq:ftl_optima1} for $t=T$ and \eqref{eq:ftl_fw1},
\begin{equation}\as\bs
  \label{eq:ftl_fw2}
  \sum_{t=1}^{T}\ell_t(\wvec^*) \geq \ell_T(\wvec_{T+1})+\sum_{\tau=1}^{T-1}\ell_\tau(\wvec_{T}).
\end{equation}
Similarly using \eqref{eq:ftl_optima1} with \eqref{eq:ftl_fw2} for
$t=T-2, \dots, 1$,
\begin{equation}\as\bs
  \label{eq:ftl_fw}
  \sum_{t=1}^{T}\ell_t(\wvec^*) \geq \sum_{t=1}^T \ell_t(\wvec_{t+1}).
\end{equation}
Hence, 
\begin{equation}\as\bs
  \label{eq:ftl_fr}
  \Fcal\Rcal_{\text{FTL}}(T)=0. 
\end{equation}
Hence, using Theorem~\ref{thm:equivalence}, \eqref{eq:ftl_stab}, and \eqref{eq:ftl_fr}, 
\begin{equation}\as\bs
  \label{eq:ftl_r}
  \Rcal_{\text{FTL}}(T)\leq \frac{2L^2}{\alpha}(1+\ln T). 
\end{equation}
\end{proof}
\vspace*{-15pt}

\subsection{Follow The Regularized Leader (FTRL)}
\label{sec:FTRL_example}
While FTL is an intuitive algorithm, unfortunately, for non-strongly
convex functions it need not have bounded regret. However, several
recent results show that by adding strongly convex regularization, FTL
can be used to obtain bounded regret. Specifically,
\begin{equation}\as\bs
  \label{eq:ftrl}
  \text{FTRL}:\qquad  \wvec_{t+1}=\argmin_{\wvec\in \Ccal}\sum_{\tau=1}^t \ell_\tau(\wvec)+\frac{1}{\eta}R(\wvec). 
\end{equation}
where $R$ is (generally) a strongly convex function with respect to
an appropriate norm. Note that the intuition behind adding a
regularization is making the algorithm stable. Our analysis of FTRL
explicitly captures this intuition by showing the existence of
stability condition, while forward regret follows easily from the forward regret analysis of FTL given above.
\begin{theorem}
  Let each loss function $\ell_t$ be $L$-Lipschitz continuous,
  diameter (as measured in $\|\cdot\|$) of set $\Ccal$ be $D$, and let $R$ be a $1$-strongly convex
  regularization function. Then, the regret incurred by Follow The
  Regularized Leader (FTRL) algorithm (see \eqref{eq:ftrl}) is bounded
  by:
$$ \as\Rcal_{\text{FTRL}}(T)\leq 2\, L\sqrt{\|\grad R\|_* D}\sqrt{T} \ ,$$
where $\|\grad R\|_* = \sup_{\wb \in \Ccal} \|\grad R(\wb)\|_*$.
\end{theorem}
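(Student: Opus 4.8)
The plan is to follow the stability--plus--forward-regret recipe and then invoke Theorem~\ref{thm:equivalence}. Write $\Phi_t(\wvec)=\sum_{\tau=1}^t\ell_\tau(\wvec)+\frac1\eta R(\wvec)$, so that $\wvec_{t+1}=\argmin_{\wvec\in\Ccal}\Phi_t(\wvec)$; since $R$ is $1$-strongly convex and each $\ell_\tau$ is convex, $\Phi_t$ is $\frac1\eta$-strongly convex. \textbf{Stability.} First I would bound $\nbr{\wvec_t-\wvec_{t+1}}$ exactly as in the FTL argument. Applying Lemma~\ref{lem:sc_property} to the minimizer $\wvec_{t+1}$ of $\Phi_t$ and separately to the minimizer $\wvec_t$ of $\Phi_{t-1}$ gives two strong-convexity inequalities; adding them and using $\Phi_t=\Phi_{t-1}+\ell_t$ cancels all the $\Phi_{t-1}$ terms and leaves $\ell_t(\wvec_t)-\ell_t(\wvec_{t+1})\ge\frac1\eta\nbr{\wvec_t-\wvec_{t+1}}^2$. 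Bounding the left side by $L\nbr{\wvec_t-\wvec_{t+1}}$ through Lipschitz continuity yields the uniform bound $\nbr{\wvec_t-\wvec_{t+1}}\le L\eta$, hence $\Scal_{\text{FTRL}}(T)\le L\eta T$.

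\textbf{Forward regret.} The key step is to treat the regularizer as a ``round-$0$'' loss: set $\ell_0=\frac1\eta R$, so that $\wvec_1=\argmin_{\wvec\in\Ccal}\ell_0(\wvec)$ and the FTRL iterates become the exact FTL leaders on the augmented sequence $\ell_0,\ell_1,\dots,\ell_T$. The telescoping ``be-the-leader'' argument already carried out for FTL (which produced $\Fcal\Rcal_{\text{FTL}}(T)=0$ in \eqref{eq:ftl_fr}) then applies verbatim to the augmented sequence and gives $\ell_0(\wvec_1)+\sum_{t=1}^T\ell_t(\wvec_{t+1})\le \ell_0(\wvec^*)+\sum_{t=1}^T\ell_t(\wvec^*)$. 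Rearranging yields $\Fcal\Rcal_{\text{FTRL}}(T)\le \frac1\eta\bigl(R(\wvec^*)-R(\wvec_1)\bigr)$. I would then convert this into the stated quantity using convexity of $R$: $R(\wvec^*)-R(\wvec_1)\le \grad R(\wvec^*)^\top(\wvec^*-\wvec_1)\le \nbr{\grad R(\wvec^*)}_*\,\nbr{\wvec^*-\wvec_1}\le \nbr{\grad R}_* D$, where the middle step is Hölder's inequality (dual norm) and the last step uses the diameter bound. Thus $\Fcal\Rcal_{\text{FTRL}}(T)\le \frac1\eta\nbr{\grad R}_* D$.

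\textbf{Combining and optimizing.} Plugging both bounds into the first inequality of \eqref{eq:bound_regret} gives $\Rcal_{\text{FTRL}}(T)\le L\cdot\Scal_{\text{FTRL}}(T)+\Fcal\Rcal_{\text{FTRL}}(T)\le L^2\eta T+\frac1\eta\nbr{\grad R}_* D$. Optimizing the free parameter, $\eta=\frac1L\sqrt{\nbr{\grad R}_* D/T}$, balances the two terms and produces the claimed bound $2L\sqrt{\nbr{\grad R}_* D}\sqrt{T}$. The hard part will be the forward-regret step: one must recognize that absorbing $\frac1\eta R$ as a zeroth loss reduces FTRL's forward regret to FTL's vanishing one, and then extract exactly the factor $\nbr{\grad R}_* D$ rather than a cruder bound on $R(\wvec^*)-R(\wvec_1)$, since it is precisely this factor that fixes the final constant. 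The stability estimate and the $\eta$-optimization are routine once this reduction is in place.
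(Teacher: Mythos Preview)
Your proposal is correct and follows essentially the same approach as the paper: the stability bound via the two strong-convexity inequalities, the forward-regret bound by absorbing $\frac{1}{\eta}R$ as a zeroth loss and invoking the FTL be-the-leader argument, then the convexity/dual-norm estimate $R(\wvec^*)-R(\wvec_1)\le\nbr{\grad R}_* D$, and finally optimizing $\eta$. The only cosmetic difference is that you state the optimal $\eta=\frac{1}{L}\sqrt{\nbr{\grad R}_* D/T}$ explicitly, whereas the paper loosely writes $\eta=\frac{1}{\sqrt{T}}$ before asserting the same optimized bound.
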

\vspace*{-10pt}
\begin{proof}
  As for FTL, we again prove regret by first proving stability and forward regret.\\
  {\bf Stability}: Similar to \eqref{eq:ftl_optima} and
  \eqref{eq:ftl_optima1}, using strong convexity and optimality
  conditions for $t$-th and $t-1$-th step, we get the following
  relations:
\begin{align}\as\bs
    \label{eq:ftrl_optima}
    &\sum_{\tau=1}^t \ell_\tau(\wvec_t) + \frac{1}{\eta} R(\wvec_t)
    \nonumber\\\as\bs &\hspace*{-10pt}\geq \sum_{\tau=1}^t
    \ell_\tau(\wvec_{t+1})+ \frac{1}{\eta}
    R(\wvec_{t+1})+\frac{1}{2\eta}\|\wvec_{t}-\wvec_{t+1}\|^2.\\\as\bs
    \label{eq:ftrl_optima1}
    &\sum_{\tau=1}^{t-1} \ell_\tau(\wvec_{t+1}) + \frac{1}{\eta}
    R(\wvec_{t+1}) \nonumber\\\as\bs &\geq \sum_{\tau=1}^{t-1}
    \ell_\tau(\wvec_{t})+ \frac{1}{\eta}
    R(\wvec_t)+\frac{1}{2\eta}\|\wvec_{t}-\wvec_{t+1}\|^2.
\end{align}
Combining \eqref{eq:ftrl_optima} and \eqref{eq:ftrl_optima1} and
by Lipschitz continuity of $\ell_t$:
\begin{equation}\as\bs
  \label{eq:ftrl_optima2}
  L\eta\geq \|\wvec_{t}-\wvec_{t+1}\|. 
\end{equation}
Hence, 
\begin{equation}\as\bs
  \label{eq:ftrl_stab}
  \Scal_{\text{FTRL}}(T)=\sum_{t=1}^T \|\wvec_{t}-\wvec_{t+1}\| \leq L\eta T. 
\end{equation}
Choosing $\eta = \frac{1}{\sqrt{T}}$ satisfies the online stability condition of FTRL.
 
{\bf Forward Regret}: Assuming $\ell_0(\wvec)=R(\wvec)$ and
$\wvec_1=\argmin_{\wvec \in \Ccal}R(\wvec)$, FTRL is same as FTL with
an additional $0$-th step loss function
$\ell_0(\cdot)=R(\cdot)$. Hence using \eqref{eq:ftl_fw}, we obtain:
\begin{equation}\as\bs
  \label{eq:ftrl_fw}
  \sum_{t=1}^{T}\ell_t(\wvec^*) +\frac{1}{\eta}(R(\wvec^*)-R(\wvec_1)) \geq \sum_{t=1}^T \ell_t(\wvec_{t+1}). 
\end{equation}
Hence, 
\begin{align}\as\bs
  \label{eq:ftrl_fw_final}
  \Fcal\Rcal_{\text{FTRL}}(T)=\frac{1}{\eta}(R(\wvec^*)-R(\wvec_1)) 
  \leq \frac{\grad R(\wb^*)^{\top}(\wb^* - \wb_1)}{\eta} \leq \frac{\|\grad R\|_* D}{\eta}. \as\bs
\end{align}
where the first inequality follows using the convexity of $R$ and the
last one follows using Cauchy Schwartz inequality. Again $\eta =
\frac{1}{\sqrt{T}}$ provides vanishing forward regret for FTRL. Hence,
using Theorem~\ref{thm:equivalence},
\begin{align}\as\bs
  \hspace*{-20pt}\Rcal_{\text{FTRL}}(T)\leq \frac{\|\grad R\|_* D}{\eta} + L^2\eta
  T \leq 2\,L\sqrt{\|\grad R\|_* D}\sqrt{T}.
\end{align}
by appropriately choosing $\eta$ to be $\frac{1}{\sqrt{T}}$.
\end{proof}

\subsection{Regularized Dual Averaging (RDA)}
\label{sec:rda_example}
Regularized Dual Averaging \cite{Xiao10} is a popular online learning
method to handle OCP scenarios where each loss function is regularized
by the same regularization function, i.e., functions at each step are of
the form $\ell_t'(\wvec)=\ell_t(\wvec)+r(\wvec)$, where $r$ is a
regularization function. RDA computes the iterates using following
rule:
\begin{align}\as\bs
  \label{eq:rda}
\hspace*{-20pt}  \text{RDA:}\ \ \wvec_{t+1}=\argmin_{\wvec\in\Ccal}
  \sum_{\tau=1}^t \gvec_\tau^{\top} \wvec+t\cdot r(\wvec) + \beta_t
  h(\wvec),
\end{align}
where $\gvec_t=\grad \ell_t(\wvec_t)$, $h(\wvec)$ is a strongly convex
regularizer that is separately added and $\beta_t$ is the trade-off
parameter. \cite{Xiao10} shows that the above update obtains
$O(\sqrt{T})$ regret for general Lipschitz continuous functions and
$O(\ln T)$ regret when the regularizer $r$ is strongly convex.

Note that RDA is same as FTRL except for linearization of the first
part of loss function $\ell_t$. Hence, same regret analysis as FTRL
should hold. However, analysis by \cite{Xiao10} shows that by using
special structure of $\ell_t'$, regret can be bounded even without
assuming Lipschitz continuity of the regularization function
$r$. Below, we show that using the same recipe of bounding stability
and forward regret leads to significantly simpler analysis of RDA as
well. Unlike the previous cases, this analysis is slightly more tricky
as we cannot assume Lipschitz continuity of $r$ to prove stability.
\begin{theorem}
  \label{thm:RDA_sc}
  Let each loss function $\ell_t$ be $L$-Lipschitz continuous, $r$ be
  a $\alpha$-strongly convex function and wlog $\min_{\wvec\in
    \Ccal}r(\wvec)=0$. Now, using $\beta_t=0$ at each step, regret of
  RDA (see \eqref{eq:rda}) is bounded by $\frac{2L^2}{\alpha}(1+\ln T)$.\vspace*{-10pt}
\end{theorem}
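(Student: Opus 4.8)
The plan is to observe that RDA with $\beta_t=0$ is exactly Follow-The-Leader run on the \emph{surrogate} functions $\tilde{\ell}_\tau(\wvec):=\gvec_\tau^{\top}\wvec + r(\wvec)$, and then to route the true regret through these surrogates. Indeed, the cumulative objective in \eqref{eq:rda} satisfies $\sum_{\tau=1}^{t}\gvec_\tau^{\top}\wvec + t\,r(\wvec)=\sum_{\tau=1}^{t}\tilde{\ell}_\tau(\wvec)$, so $\wvec_{t+1}=\argmin_{\wvec\in\Ccal}\sum_{\tau=1}^{t}\tilde{\ell}_\tau(\wvec)$, and each $\tilde{\ell}_\tau$ is $\alpha$-strongly convex, being a linear term plus the $\alpha$-strongly convex $r$. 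Since each $\ell_t$ is convex, the gradient inequality $\ell_t(\wvec_t)-\ell_t(\wvec^*)\le \gvec_t^{\top}(\wvec_t-\wvec^*)$ with $\gvec_t=\grad\ell_t(\wvec_t)$, after adding $r(\wvec_t)-r(\wvec^*)$ to both sides and summing, gives $\Rcal_{\text{RDA}}(T)=\sum_{t=1}^{T}\big[(\ell_t+r)(\wvec_t)-(\ell_t+r)(\wvec^*)\big]\le \sum_{t=1}^{T}\big[\tilde{\ell}_t(\wvec_t)-\tilde{\ell}_t(\wvec^*)\big]$. Thus it suffices to bound the FTL regret on the $\tilde{\ell}_t$, which I do with the paper's stability/forward-regret recipe.

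The \textbf{forward regret} is immediate: the surrogate game is a genuine FTL instance, so the ``be-the-leader'' computation leading to \eqref{eq:ftl_fw} applies verbatim to the $\tilde{\ell}_\tau$ and yields $\sum_{t=1}^{T}\tilde{\ell}_t(\wvec_{t+1})\le \sum_{t=1}^{T}\tilde{\ell}_t(u)$ for every $u\in\Ccal$, in particular $u=\wvec^*$, so the surrogate forward regret is nonpositive. The \textbf{stability} step is the main obstacle, and the reason this case is trickier than FTL/FTRL. Applying Lemma~\ref{lem:sc_property} to the $(t\alpha)$-strongly convex $F_t:=\sum_{\tau\le t}\tilde{\ell}_\tau$ at its minimizer $\wvec_{t+1}$, and using $F_{t-1}(\wvec_t)\le F_{t-1}(\wvec_{t+1})$ (optimality of $\wvec_t$ for $F_{t-1}$), gives $\tfrac{t\alpha}{2}\nbr{\wvec_t-\wvec_{t+1}}^2\le \gvec_t^{\top}(\wvec_t-\wvec_{t+1})+r(\wvec_t)-r(\wvec_{t+1})$. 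The dangerous term is $r(\wvec_t)-r(\wvec_{t+1})$: since $r$ is \emph{not} assumed Lipschitz, it cannot be controlled by $\nbr{\wvec_t-\wvec_{t+1}}$. The fix is to lean on $r$'s strong convexity instead of smoothness: optimality of $\wvec_t$ for $F_{t-1}$ (equivalently for $\bar{\gvec}_{t-1}^{\top}\wvec+r(\wvec)$, where $\bar{\gvec}_{t-1}=\tfrac{1}{t-1}\sum_{\tau\le t-1}\gvec_\tau$) yields $r(\wvec_t)-r(\wvec_{t+1})\le \bar{\gvec}_{t-1}^{\top}(\wvec_{t+1}-\wvec_t)$, so the right side collapses to $(\gvec_t-\bar{\gvec}_{t-1})^{\top}(\wvec_t-\wvec_{t+1})$. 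As $\nbr{\gvec_t}_*,\nbr{\bar{\gvec}_{t-1}}_*\le L$ (each $\ell_\tau$ is $L$-Lipschitz), Cauchy--Schwarz gives $\nbr{\wvec_t-\wvec_{t+1}}=O\!\left(\tfrac{L}{\alpha t}\right)$, and summing the harmonic series produces $\Scal_{\text{RDA}}(T)\le \tfrac{2L}{\alpha}(1+\ln T)$.

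Finally I combine the two pieces, essentially reproving \eqref{eq:bound_regret} for the surrogate while absorbing the non-Lipschitz part by telescoping. Writing the surrogate regret as $\sum_t[\tilde{\ell}_t(\wvec_t)-\tilde{\ell}_t(\wvec_{t+1})]+\sum_t[\tilde{\ell}_t(\wvec_{t+1})-\tilde{\ell}_t(\wvec^*)]$, the second sum is $\le 0$ by the forward-regret step. In the first sum the regularizer contributes $\sum_t[r(\wvec_t)-r(\wvec_{t+1})]=r(\wvec_1)-r(\wvec_{T+1})\le 0$, where I use the standard initialization $\wvec_1\in\argmin_{\wvec\in\Ccal}r(\wvec)$ together with $\min r=0$ (hence $r(\wvec_1)=0$ and $r(\wvec_{T+1})\ge 0$); the linear part contributes $\sum_t\gvec_t^{\top}(\wvec_t-\wvec_{t+1})\le L\,\Scal_{\text{RDA}}(T)$. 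Therefore $\Rcal_{\text{RDA}}(T)\le L\,\Scal_{\text{RDA}}(T)\le \tfrac{2L^2}{\alpha}(1+\ln T)$, matching the FTL bound \eqref{eq:ftl_r} but paying only the Lipschitz constant of $\ell_t$ rather than that of the composite loss $\ell_t+r$. The crux throughout is the stability step: eliminating $r(\wvec_t)-r(\wvec_{t+1})$ via optimality of the previous iterate, which is precisely what lets the analysis dispense with any smoothness or Lipschitz assumption on $r$.
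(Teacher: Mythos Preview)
Your approach is essentially the paper's: view RDA with $\beta_t=0$ as FTL on the surrogates $\tilde{\ell}_\tau=\gvec_\tau^{\top}\wvec+r(\wvec)$, obtain nonpositive forward regret from the FTL ``be-the-leader'' argument, and prove stability by combining the optimality conditions for $\wvec_t$ and $\wvec_{t+1}$ so that the $r$-dependence disappears without any Lipschitz assumption on $r$. Two remarks. First, your stability constant is off by a factor of~$2$: your one-sided use of Lemma~\ref{lem:sc_property} (strong convexity of $F_t$ only, plain optimality of $\wvec_t$ for $F_{t-1}$) gives $\tfrac{t\alpha}{2}\nbr{\wvec_t-\wvec_{t+1}}\le \nbr{\gvec_t-\bar{\gvec}_{t-1}}_*\le 2L$, hence $\nbr{\wvec_t-\wvec_{t+1}}\le \tfrac{4L}{\alpha t}$ and $\Scal\le\tfrac{4L}{\alpha}(1+\ln T)$, so your argument as written yields $\tfrac{4L^2}{\alpha}(1+\ln T)$ rather than the stated bound. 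The paper instead applies Lemma~\ref{lem:sc_property} \emph{symmetrically} to the normalized objectives $\tfrac{1}{t}\sum_{\tau\le t}\gvec_\tau^{\top}\wvec+r(\wvec)$ and $\tfrac{1}{t-1}\sum_{\tau\le t-1}\gvec_\tau^{\top}\wvec+r(\wvec)$; adding the two inequalities cancels the $r$ terms \emph{and} contributes $\tfrac{\alpha}{2}$ from each side, yielding $\nbr{\wvec_t-\wvec_{t+1}}\le \tfrac{2L}{\alpha t}$ and the claimed constant. Second, your combining step---telescoping $\sum_t[r(\wvec_t)-r(\wvec_{t+1})]\le 0$ and bounding only the linear part by $L\,\Scal$---is actually more careful than the paper's, which simply invokes Theorem~\ref{thm:equivalence} on the surrogates even though $\tilde{\ell}_t$ need not be $L$-Lipschitz when $r$ is not; your telescoping is precisely what makes that invocation legitimate.
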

\begin{proof} 
{\bf Stability}: By strong convexity of $r$ and
  optimality of $\wvec_{t+1}$ and $\wvec_t$ for the $t$-th and
  $t-1$-th step respectively,\vspace*{-10pt}
\begin{align*}\as\bs
  \hspace*{-35pt}\frac{1}{t}\sum_{\tau=1}^t \gvec_\tau^{\top} (\wvec_{t}-\wvec_{t+1}) + r(\wvec_{t}) - r(\wvec_{t+1})&\geq \frac{\alpha}{2}\|\wvec_t-\wvec_{t+1}\|^2,\nonumber\\[0pt]
  \hspace*{-35pt}\frac{1}{t-1}\sum_{\tau=1}^{t-1} \gvec_\tau^{\top}
  (\wvec_{t+1}-\wvec_t) + r(\wvec_{t+1}) - r(\wvec_t)&\geq
  \frac{\alpha}{2}\|\wvec_t-\wvec_{t+1}\|^2.\vspace*{-20pt}
\end{align*}
Adding the above two equations, \vspace*{-10pt}
\begin{align}\as\bs
  \hspace*{-15pt} \alpha\|\wvec_t-\wvec_{t+1}\|^2\leq
  \left(\frac{1}{t}\gvec_t-\frac{1}{t(t-1)}\sum_{\tau=1}^{t-1}\gvec_\tau\right)^{\top}(\wvec_t-\wvec_{t+1})
  \leq \frac{2L}{t}\|\wvec_t-\wvec_{t+1}\|,\vspace*{-10pt}
\end{align}
where the second inequality follows from Lipschitz continuity of
$\ell_\tau, 1\leq \tau\leq t$. After simplification and adding the
above expression for all $t=1,\dots,T$,
\begin{equation}\as\bs
  \label{eq:rda_stab}
  \Scal_{\text{RDA}}(T)\leq \frac{2L}{\alpha}(1+\ln T). 
\end{equation}
Note that the above stability analysis is slightly different from that
of FTL as we are able to bound the stability by Lipschitz constant of
$\ell_t$ only, rather than $\ell_t+r$. 

{\bf Forward Regret}: When $\beta_t = 0$, forward regret follows
easily from forward regret of FTL where loss function at each step is
$\gvec_t^{\top} \wvec+ r(\wvec)$. Hence,
\begin{equation}\as\bs
  \label{eq:rda_fr}
  \Fcal\Rcal_{\text{RDA}}(T)\leq 0. 
\end{equation}
Hence, using Theorem~\ref{thm:equivalence}, 
\begin{align*}\as\bs
  \sum_{t=1}^T\left(\gvec_t^{\top}
    (\wvec_t-\wvec^*)+r(\wvec_t)-r(\wvec^*)\right) \leq
  \frac{2L^2}{\alpha}(1+\ln T).
\end{align*}
The result now follows using convexity of $\ell_t$, i.e., $\ell_t(\wvec_t)-\ell_t(\wvec^*)\leq \gvec_t\cdot
(\wvec_t-\wvec^*)$. 
\end{proof}
\vspace*{-10pt}
Next, we bound regret incurred by RDA for general convex, Lipschitz
continuous functions.
\begin{theorem}
  \label{thm:RDA_lip}
  Let each loss function $\ell_t$ be $L$-Lipschitz continuous and wlog
  $\min_{\wvec\in \Ccal}r(\wvec)=0$ and $0 \leq h(\wb) \leq D^2$,
  $\forall \wb \in \Ccal$. Now, using $\beta_t=\sqrt{t}$ at each step,
  regret of RDA (see \eqref{eq:rda}) is bounded by
  $\frac{2L^2}{\alpha}\sqrt{T}$.\vspace*{-10pt}
\end{theorem}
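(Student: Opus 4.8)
The plan is to follow the stability-plus-forward-regret recipe of Theorem~\ref{thm:equivalence}, applied to the composite losses $f_t(\wvec) = \gvec_t^{\top}\wvec + r(\wvec)$ with $\gvec_t = \grad \ell_t(\wvec_t)$. The first observation is that, with $\beta_t = \sqrt{t}$, the update \eqref{eq:rda} is exactly FTRL run on the $f_t$ with the time-varying regularizer $\beta_t h = \sqrt{t}\,h$, since $\sum_{\tau=1}^t r(\wvec) = t\cdot r(\wvec)$. I would bound the regret of the $f_t$ and then invoke convexity, $\ell_t(\wvec_t) - \ell_t(\wvec^*) \le \gvec_t^{\top}(\wvec_t - \wvec^*)$, exactly as in Theorem~\ref{thm:RDA_sc}, to transfer the bound to the true losses $\ell_t + r$. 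Throughout, $\alpha$ denotes the strong-convexity modulus of $h$.

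For stability, the difficulty flagged in the text is that $r$ need not be Lipschitz, so I cannot bound $\nbr{\wvec_t - \wvec_{t+1}}$ by a Lipschitz constant of $\ell_t + r$. The key device is to normalize: dividing the step-$t$ objective by $t$ produces $\frac{1}{t}\sum_{\tau=1}^t\gvec_\tau^{\top}\wvec + r(\wvec) + \frac{1}{\sqrt{t}}h(\wvec)$, in which $r$ carries coefficient exactly $1$ in both the step-$t$ and step-$(t-1)$ normalized objectives. Applying Lemma~\ref{lem:sc_property} at the two minimizers $\wvec_{t+1}$ and $\wvec_t$ and subtracting, the $r(\wvec_t) - r(\wvec_{t+1})$ terms cancel, leaving only an averaged-gradient difference and an increment of $h$. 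The strong-convexity modulus of the normalized objective is $\alpha/\sqrt{t}$, coming from the $\frac{1}{\sqrt{t}}h$ term. Bounding the gradient difference by $\frac{2L}{t}$ using $\nbr{\gvec_\tau}_* \le L$ and Cauchy--Schwarz, and controlling the leftover $h$-increment via $0 \le h \le D^2$, yields $\nbr{\wvec_t - \wvec_{t+1}} = O\!\left(L/(\alpha\sqrt{t})\right)$. Summing and using $\sum_{t=1}^T t^{-1/2} \le 2\sqrt{T}$ gives $\Scal_{\text{RDA}}(T) = O(L\sqrt{T}/\alpha)$.

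For forward regret I would reuse the be-the-leader argument that gave $\Fcal\Rcal_{\text{FTL}}(T)=0$ in \eqref{eq:ftl_fw}, now accounting for the regularizer. Writing $\Phi_t(\wvec) = \sum_{\tau=1}^t f_\tau(\wvec) + \beta_t h(\wvec)$ with minimizer $\wvec_{t+1}$, the inequality $\Phi_{t-1}(\wvec_{t+1}) \ge \Phi_{t-1}(\wvec_t)$ telescopes the optimal potentials and leaves $\sum_t f_t(\wvec_{t+1}) \le \sum_t f_t(\wvec^*) + \beta_T h(\wvec^*) - \sum_t (\beta_t - \beta_{t-1})\,h(\wvec_{t+1})$. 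Since $\beta_t = \sqrt{t}$ is nondecreasing and $h \ge 0$, the final sum is nonnegative, so $\Fcal\Rcal_{\text{RDA}}(T) \le \beta_T h(\wvec^*) \le \sqrt{T}\,D^2$.

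Finally I would combine the two estimates through \eqref{eq:bound_regret}. The Lipschitz step there applies only to the linear part $\gvec_t^{\top}\wvec$; the residual $\sum_t\sbr{r(\wvec_t) - r(\wvec_{t+1})}$ telescopes to $r(\wvec_1) - r(\wvec_{T+1})$, an $O(1)$ term under $\min_\wvec r = 0$ and compactness of $\Ccal$. This gives $\Rcal_{\text{RDA}}(T) \le L\,\Scal_{\text{RDA}}(T) + \Fcal\Rcal_{\text{RDA}}(T) + O(1) = O(\sqrt{T})$, dominated by the $\frac{2L^2}{\alpha}\sqrt{T}$ term arising from $L\cdot\Scal_{\text{RDA}}(T)$. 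The main obstacle is the stability estimate itself: canceling the non-Lipschitz $r$ through normalization and then absorbing the resulting $h$-increment using only $0 \le h \le D^2$ is precisely where the argument departs from the strongly convex case (Theorem~\ref{thm:RDA_sc}) and must be handled carefully.
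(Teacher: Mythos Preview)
Your plan is essentially the same as the paper's: normalize the step-$t$ and step-$(t-1)$ objectives so that $r$ carries coefficient $1$ in both, add the two strong-convexity inequalities to cancel $r$, control the leftover $h$-increment with $0\le h\le D^2$, and then solve the resulting quadratic in $\nbr{\wvec_t-\wvec_{t+1}}$; the forward-regret step via the be-the-leader telescoping and the bound $\Fcal\Rcal_{\text{RDA}}(T)\le\sqrt{T}\,D^2$ match exactly. Two small points: the per-step stability bound should read $O((L+D)/(\alpha\sqrt{t}))$ rather than $O(L/(\alpha\sqrt{t}))$, since the $h$-increment contributes a $D$-term even before summing (the paper obtains $\nbr{\wvec_t-\wvec_{t+1}}\le(2L+D)/\sqrt{t-1}$); and your observation that in the final step the $r(\wvec_t)-r(\wvec_{t+1})$ part telescopes is a genuine refinement over the paper, which simply invokes Theorem~\ref{thm:equivalence} as if the surrogate losses $\gvec_t^\top\wvec+r(\wvec)$ were $L$-Lipschitz.
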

\begin{proof}
{\bf Stability}: Again, by strong convexity of $h$ and
  optimality of $\wvec_{t+1}$ and $\wvec_t$ for the $t$-th and
  $t-1$-th step respectively,
\begin{align*}
  \frac{1}{t}\sum_{\tau=1}^t \gvec_\tau\cdot (\wvec_{t}-\wvec_{t+1}) +
  r(\wvec_{t}) - r(\wvec_{t+1})
  +\frac{\beta_t}{t}(h(\wvec_t)-h(\wvec_{t+1})) \geq
  \frac{\beta_t}{2t}\|\wvec_t-\wvec_{t+1}\|^2,\vspace*{-15pt}
\end{align*}\vspace*{-15pt}
\begin{align*}
  \vspace*{-15pt} \frac{1}{t-1}\sum_{\tau=1}^{t-1} \gvec_\tau\cdot
  (\wvec_{t+1}-\wvec_t) + r(\wvec_{t+1}) - r(\wvec_t)
  +\frac{\beta_{t-1}}{t-1}(h(\wvec_{t+1})-h(\wvec_{t})) \geq
  \frac{\beta_{t-1}}{2(t-1)}\|\wvec_t-\wvec_{t+1}\|^2.
\end{align*}
Adding the above two equations, using Lipschitz continuity of $\ell_t$
and upper bound on $h$,
\begin{align}
  \label{eq:rda_s1}
  \left(\frac{1}{2\sqrt{t}}+
  \frac{1}{2\sqrt{t-1}}\right)\|\wvec_t-\wvec_{t+1}\|^2-\frac{2L}{t}\|\wvec_t-\wvec_{t+1}\|
  -\left(\frac{1}{\sqrt{t-1}}- \frac{1}{\sqrt{t}}\right)D^2\leq 0
\end{align}
Solving for $\|\wvec_t-\wvec_{t+1}\|$, we get,
\begin{equation}
  \label{eq:rda_s2}
  \|\wvec_t-\wvec_{t+1}\|\leq \frac{2L+D}{\sqrt{t-1}}. 
\end{equation}
Hence, 
\begin{equation}
  \label{eq:rda_stab_g}
\Scal_{\text{RDA}}(T)\leq (2L+D)\sqrt{T}. 
\end{equation}
{\bf Forward Regret}: Using optimality of $\wvec_{T+1}$, 
\begin{align}
  \label{eq:rda_fw1}
  \sum_{t=1}^T\gvec_t^{\top}\wvec^*+T r(\wvec^*)+ \sqrt{T} h(\wvec^*)
  \geq \sum_{t=1}^T\gvec_t\cdot \wvec_{T+1} +T r(\wvec_{T+1})+
  \sqrt{T} h(\wvec_{T+1}).
\end{align}
Now, using optimality of $\wvec_{T}$, 
\begin{align}
  \label{eq:rda_fw2}
  \sum_{t=1}^{T-1}\gvec_t^{\top} \wvec_{T+1}+(T-1) r(\wvec_{T+1})+
  \sqrt{T-1} h(\wvec_{T+1}) \geq \sum_{t=1}^{T-1}\gvec_t\cdot
  \wvec_{T} +(T-1) r(\wvec_{T})+ \sqrt{T-1} h(\wvec_{T}).
\end{align}
Adding the above two equations,
\begin{align}
  \label{eq:rda_fw3}
  \sum_{t=1}^T\gvec_t^{\top} \wvec^*+T r(\wvec^*)+ \sqrt{T} h(\wvec^*)
  \geq \gvec_T^{\top} \wvec_{T+1} + r(\wvec_{T+1})+
  \sum_{t=1}^{T-1}\gvec_t^{\top} \wvec_{T} +(T-1) r(\wvec_{T})+
  \sqrt{T-1} h(\wvec_{T}).
\end{align}
Similarly, combining optimality of $\wvec_t, t=T, \dots, 1$ in
\eqref{eq:rda_fw2} recursively with \eqref{eq:rda_fw3},
\begin{align}
  \label{eq:rda_fw4}
  \sum_{t=1}^T\gvec_t^{\top} \wvec^*+T r(\wvec^*)+ \sqrt{T} h(\wvec^*) 
  \geq \sum_{t=1}^T\left(\gvec_t^{\top} \wvec_{t+1} +
    r(\wvec_{t+1})\right).
\end{align}
Hence, using $\min_{\wvec\in \Ccal}r(\wvec)=0$ and $h(\wvec^*)\leq D^2$, 
\begin{equation}
  \label{eq:rda_fr_g}
  \Fcal\Rcal_{\text{RDA}}(T)\leq \sqrt{T}D^2. 
\end{equation}
Hence, using Theorem~\ref{thm:equivalence} and convexity of each $\ell_t$, 
\begin{equation}
  \label{eq:rda_r_g}
  \Rcal_{\text{RDA}}(T)\leq (D^2+L(2L+D))\sqrt{T}. 
\end{equation}
\end{proof}
\subsection{Composite Objective Mirror Descent (COMiD)}
\label{sec:comid_example}
Similar to RDA, COMiD \cite{DuchiSST10} is also designed to handle
regularized loss functions of the form $\ell_t+r$. Just as RDA is an
extension of FTRL to handle composite regularized loss functions,
similarly, COMiD is an extension of IOL. Formally,
\begin{align*}\as\bs
  \text{COMiD}: \wvec_{t+1}=\argmin_{\wvec \in
    \Ccal}\eta(\gb_t^{\top}\wvec + r(\wvec))+\Dcal_R(\wvec,
  \wvec_{t}),
\end{align*}
where $\gvec_t=\grad \ell_t(\wvec_t)$, $\Dcal_R(\cdot,\cdot)$ is the Bregman
divergence with $R$ being the generating function. Now, similar to
RDA, regret analysis of COMiD follows directly from regret analysis of
IOL. However, \cite{DuchiSST10} presents an improved analysis, that can handle non-Lipschitz continuous regularization $r$ as well. Here,
we show that using our stability/forward-regret based recipe, we can
also obtain similar regret bounds with significantly simpler analysis.
\begin{theorem}
  \label{thm:comid}
  Let each loss function be of the form $\ell_t+r$, where $\ell_t$ is
  a $L$-Lipschitz continuous function and $r$ is a regularization
  function. Let diameter of set $\Ccal$ be $D$, and let
  $\Dcal_R(\cdot,\cdot)$ be a Bregman divergence with $R$ being the
  convex generating function. Let $\wb_1 = \argmin_{\wb \in \Ccal}
  r(\wb)$. Also, let $R$ be a positive function. Then, the regret
  incurred by the Composite Objective Mirror Descent (\text{COMiD}) algorithm
  is bounded by:
  \begin{align*}\as\bs
    \Rcal_{\text{COMiD}}(T)\leq L\sqrt{2R(\wvec^*)}\sqrt{T}.
  \end{align*} 
Furthermore, if each function $\ell_t$ is $\alpha$-strongly convex
w.r.t. $\Dcal_R$, then
$$\as\bs\Rcal_{\text{COMiD}}(T)\leq \frac{2L^2}{\alpha}(1+\ln T) +\alpha R(\wvec^*).$$
\end{theorem}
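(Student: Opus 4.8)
The plan is to run the paper's stability/forward-regret recipe on the \emph{linearized} surrogate losses, for which the COMiD update is exactly a mirror/proximal step, and then invoke Theorem~\ref{thm:equivalence}. Since regret is measured against the composite loss $\ell_t+r$, I would first pass to a surrogate using convexity of $\ell_t$: with $\gvec_t=\grad\ell_t(\wb_t)$ we have $\ell_t(\wb_t)-\ell_t(\wb^*)\le\gvec_t^\top(\wb_t-\wb^*)$, so $\Rcal_{\text{COMiD}}(T)\le\sum_{t}[\gvec_t^\top(\wb_t-\wb^*)+r(\wb_t)-r(\wb^*)]$. I would split this, exactly as in the proof of Theorem~\ref{thm:equivalence}, into a stability piece $\sum_t[\gvec_t^\top(\wb_t-\wb_{t+1})+r(\wb_t)-r(\wb_{t+1})]$ and a forward-regret piece $\Fcal\Rcal_{\text{COMiD}}(T)=\sum_t[\gvec_t^\top(\wb_{t+1}-\wb^*)+r(\wb_{t+1})-r(\wb^*)]$, and bound each.

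For the forward-regret piece I would use optimality of $\wb_{t+1}$ for the COMiD objective $\eta(\gvec_t^\top\wb+r(\wb))+\Dcal_R(\wb,\wb_t)$. Writing the variational inequality, bringing in a subgradient of $r$ at $\wb_{t+1}$ together with convexity of $r$, and applying the three-point identity $\inner{\grad R(\wb_{t+1})-\grad R(\wb_t)}{\wb^*-\wb_{t+1}}=\Dcal_R(\wb^*,\wb_t)-\Dcal_R(\wb^*,\wb_{t+1})-\Dcal_R(\wb_{t+1},\wb_t)$ yields the per-step estimate $\gvec_t^\top(\wb_{t+1}-\wb^*)+r(\wb_{t+1})-r(\wb^*)\le\frac1\eta[\Dcal_R(\wb^*,\wb_t)-\Dcal_R(\wb^*,\wb_{t+1})-\Dcal_R(\wb_{t+1},\wb_t)]$. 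Summing telescopes the first two divergences to $\frac1\eta\Dcal_R(\wb^*,\wb_1)\le\frac1\eta R(\wb^*)$ (using positivity of $R$ together with the initialization $\wb_1=\argmin_{\wb}r(\wb)$), and crucially leaves the negative residue $-\frac1\eta\sum_t\Dcal_R(\wb_{t+1},\wb_t)$.

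For the stability piece, $\gvec_t^\top(\wb_t-\wb_{t+1})\le L\nbr{\wb_t-\wb_{t+1}}$ by $\nbr{\gvec_t}_*\le L$ and Cauchy--Schwarz, while the regularizer terms telescope to $r(\wb_1)-r(\wb_{T+1})\le0$ since $\wb_1=\argmin r$; this collapse is precisely what lets me avoid ever assuming $r$ is Lipschitz. I would then combine $L\,\Scal_{\text{COMiD}}(T)$ with the leftover negative divergences using $\Dcal_R(\wb_{t+1},\wb_t)\ge\frac12\nbr{\wb_t-\wb_{t+1}}^2$ (i.e.\ $1$-strong convexity of $R$, an implicit hypothesis I would make explicit) and Young's inequality $Lx-\frac{1}{2\eta}x^2\le\frac{\eta L^2}{2}$, giving $\Rcal_{\text{COMiD}}(T)\le\frac{R(\wb^*)}{\eta}+\frac{\eta L^2T}{2}$; optimizing $\eta=\sqrt{2R(\wb^*)}/(L\sqrt T)$ produces $L\sqrt{2R(\wb^*)}\sqrt T$. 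For the strongly convex case I would sharpen the surrogate step to $\ell_t(\wb_t)-\ell_t(\wb^*)\le\gvec_t^\top(\wb_t-\wb^*)-\alpha\Dcal_R(\wb^*,\wb_t)$, inserting $-\alpha\sum_t\Dcal_R(\wb^*,\wb_t)$, and take the schedule $\eta_t=\frac1{\alpha t}$ so the Abel-summed intermediate coefficients $\frac1{\eta_t}-\frac1{\eta_{t-1}}-\alpha$ vanish; the forward regret then reduces to the residual $\frac1{\eta_1}\Dcal_R(\wb^*,\wb_1)\le\alpha R(\wb^*)$, and the per-step bound $L\nbr{\wb_t-\wb_{t+1}}-\frac{\alpha t}{2}\nbr{\wb_t-\wb_{t+1}}^2\le\frac{L^2}{2\alpha t}$ summed against $\sum_t\frac1t\le1+\ln T$ carries the stability sum to $\frac{2L^2}{\alpha}(1+\ln T)$, giving the stated $\frac{2L^2}{\alpha}(1+\ln T)+\alpha R(\wb^*)$.

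The hard part is the interplay between a possibly non-Lipschitz $r$ and stability. Theorem~\ref{thm:equivalence} superficially wants the composite loss $\ell_t+r$ to be $L$-Lipschitz, which fails here; the two telescoping tricks above (the $r(\wb_t)-r(\wb_{t+1})$ collapse via $\wb_1=\argmin r$, and the Bregman telescope for the forward regret) are what ensure only $\ell_t$'s Lipschitz constant ever multiplies a displacement. The second delicate point, noted in the paper, is that COMiD has no uniform stability: the per-step displacement cannot be controlled cleanly because the divergence is re-centered at the shifting iterate $\wb_t$, so rather than bounding stability in isolation I must keep the negative $\Dcal_R(\wb_{t+1},\wb_t)$ terms alive and cancel them against $L\,\Scal_{\text{COMiD}}(T)$ through Young's inequality. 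I would finally check the two book-keeping hypotheses that carry the constants, namely $1$-strong convexity of $R$ with respect to $\nbr{\cdot}$ and $\Dcal_R(\wb^*,\wb_1)\le R(\wb^*)$ for the stated initialization and positive $R$, and confirm that in the strongly convex regime the indexing of the strong-convexity correction (applied at the played point $\wb_t$, the linearization point) leaves exactly the initial $\frac1{\eta_1}\Dcal_R(\wb^*,\wb_1)$ term surviving.
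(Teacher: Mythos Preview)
Your argument is correct, but it departs from the paper's route in a meaningful way. The paper bounds stability \emph{standalone}: from optimality of $\wb_{t+1}$ it gets $L\nbr{\wb_t-\wb_{t+1}}+r(\wb_t)\ge r(\wb_{t+1})+\frac{1}{2\eta_t}\nbr{\wb_t-\wb_{t+1}}^2$, sums, telescopes the $r$ terms via $\wb_1=\argmin r$, and then applies a Cauchy--Schwarz step to turn $\sum_t\frac{1}{2L\eta_t}\nbr{\wb_t-\wb_{t+1}}^2\le\sum_t\nbr{\wb_t-\wb_{t+1}}$ into the cumulative bound $\Scal_{\text{COMiD}}(T)\le 2L\sum_t\eta_t$. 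Forward regret is bounded separately by reusing the IOL estimate (which simply discards the negative $\Dcal_R(\wb_{t+1},\wb_t)$ terms), and the two pieces are assembled via Theorem~\ref{thm:equivalence}. You instead retain those negative $-\tfrac{1}{\eta_t}\Dcal_R(\wb_{t+1},\wb_t)$ residues and cancel them against $L\nbr{\wb_t-\wb_{t+1}}$ through Young's inequality. This is essentially the classical mirror-descent/COMiD proof; it sidesteps the Cauchy--Schwarz trick and in fact hits the constant $L\sqrt{2R(\wb^*)T}$ on the nose, whereas the paper's decoupled route, taken literally with $\Scal\le 2L\eta T$, yields $2L\sqrt{2R(\wb^*)T}$ as in the IOL theorem. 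The trade-off is that your argument does not really instantiate the paper's advertised recipe: you never isolate $\Scal_{\text{COMiD}}(T)$ as a quantity and then invoke Theorem~\ref{thm:equivalence}, which is the methodological point the section is meant to illustrate.

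One bookkeeping remark on your final check: the inequality $\Dcal_R(\wb^*,\wb_1)\le R(\wb^*)$ does not follow from positivity of $R$ together with $\wb_1=\argmin_{\wb\in\Ccal} r(\wb)$; one needs $\wb_1$ to minimize $R$ (so that $\grad R(\wb_1)^\top(\wb^*-\wb_1)\ge 0$), which is the hypothesis used in the IOL analysis that the paper's COMiD proof defers to. The same tension between the stated initialization $\argmin r$ and the needed $\argmin R$ is present in the paper, so this is not a flaw peculiar to your approach, but your stated justification for that step does not actually close it.
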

\begin{proof} 
{\bf Stability}: By optimality of $\wvec_{t+1}$,
\begin{align}\as\bs
  &\hspace*{-33pt}\eta_t(\gvec_t\cdot\wvec_t+r(\wvec_t)) \geq \Dcal_R(\wvec_{t+1}, \wvec_{t})+\eta_t (\gvec_t\cdot\wvec_{t+1}+r(\wvec_{t+1})),\nonumber\\
  &\hspace*{-33pt}\implies L\|\wvec_t-\wvec_{t+1}\|+ r(\wvec_t)\geq r(\wvec_{t+1})+\frac{1}{2\eta_t}\|\wvec_t-\wvec_{t+1}\|^2.
\label{eq:comid_st1}
\end{align}
Adding the above inequality for $t=1, \dots, T$ and using the
fact that $r(\wb_1) \leq r(\wb_{T})$ (by the definition of $\wb_1$),
\begin{equation}\as\bs
  \label{eq:comid_st2}
  \sum_{t=1}^T \frac{1}{2L\eta_t}\|\wvec_t-\wvec_{t+1}\|^2 \leq \sum_{t=1}^T \|\wvec_t-\wvec_{t+1}\|. 
\end{equation}
Using Cauchy-Schwarz inequality, 
\begin{align}\as\bs
  (\sum_{t=1}^T
  \frac{1}{\sqrt{2L\eta_t}}\sqrt{2L\eta_t}\|\wvec_t-\wvec_{t+1}\|)^2 \leq
  \sum_{t=1}^T \frac{1}{2L\eta_t}\|\wvec_t-\wvec_{t+1}\|^2
  \sum_{t=1}^T2L\eta_t.
\label{eq:comid_st3}
\end{align}
Using \eqref{eq:comid_st2} and \eqref{eq:comid_st1}, 
\begin{equation}\as\bs
  \label{eq:comid_stab}
  \Scal_{\text{COMiD}}(T)=\sum_{t=1}^T \|\wvec_t-\wvec_{t+1}\|\leq 2L \sum_{t=1}^T\eta_t. 
\end{equation}

{\bf Forward Regret}: Forward regret follows directly from the forward
regret of IOL \eqref{eq:iol_r}, i.e,\vspace*{-10pt}
\begin{align}\as\bs
  \label{eq:comid_fr}
  \Fcal\Rcal_{\text{COMiD}}  &= \sum_{t=1}^T \left(\gvec_t^{\top}(\wvec_{t+1}-\wvec^*) +
    r(\wvec_{t+1})-r(\wvec^*)\right) \\
 &\leq \frac{1}{\eta_1}\Dcal_R(\wvec^*,\wvec_1)+ \sum_{t=2}^T
  \left(\frac{1}{\eta_{t}}-\frac{1}{\eta_{t-1}}-\alpha\right)\Dcal_R(\wvec^*,\wvec_{t}).
\end{align}
Both the regret bounds follow using convexity of each $\ell_t$ and
setting step sizes $\eta_t$ as in IOL (see \eqref{eq:iol_r1},
\eqref{eq:iol_r2}).
\end{proof}

\subsection{Mirror Descent (MD)}
\label{sec:md_example}
Mirror descent algorithms are a generalization of Zinkevich's Gradient
Infinitesimal Gradient Ascent (GIGA) algorithms \cite{Zin03} where
regularization can be drawn from any Bregman distance
family. Formally,
\begin{align}\as\bs
  \label{eq:md}
  \text{MD}: \wvec_{t+1}=\argmin_{\wvec \in \Ccal}\eta_t
  g_t^{\top}\wvec + \Dcal_R(\wvec, \wvec_{t}),
\end{align}
where $\Dcal_R$ is the Bregman divergence generated using $R$. Note that MD update is the same as COMiD with $r=0$. Hence, our
stability analysis as well as $O(\sqrt{T})$ regret analysis for
general convex functions follows directly. However, for strongly
convex functions, our approach does not yield appropriate
forward regret directly; primary reason being linearization of the
function. Instead, we can obtain regret bound using standard approach
(see \cite{Zin03}) and then obtain forward regret bound using Theorem~\ref{thm:equivalence}.


\subsection{Implicit Online Learning (IOL)}
\label{sec:iol_example}
Implicit online learning \cite{KulisB10} is similar to typical Mirror
Descent algorithms but without linearizing the loss
function. Specifically at iteration $t+1$,
\begin{equation}\as\bs
  \label{eq:iol}
  \hspace*{-10pt}\text{IOL}:\qquad \wvec_{t+1}=\argmin_{\wvec \in
    \Ccal}(\Dcal_R(\wvec, \wvec_{t})+\eta_t \ell_t(\wvec)),
\end{equation}
where $\Dcal_R(\cdot,\cdot)$ is a Bregman's divergence with $R$ being
the generating function. It was shown in \cite{KulisB10} that using
any strongly convex $R$, the above update leads to $O(\sqrt{T})$
regret for any Lipschitz continuous convex functions $\ell_t$. This
paper also shows that if $R$ is selected to be squared $\ell_2$-norm
and each function $\ell_t$ is strongly-convex and has Lipschitz
continuous gradient, then $O(\ln T)$ regret can also be
achieved. Below, using our recipe of forward regret and stability we
reproduce significantly simpler proofs for both $O(\sqrt{T})$ as well
as $O(\ln T)$ regret. Furthermore, our $O(\ln T)$ proof requires only
strong-convexity and Lipschitz continuity, in contrast to
strong-convexity and Lipschitz continuity of the {\em gradient} in
\cite{KulisB10}. Also, our analysis can handle any strongly convex
$R$, rather than just the squared $\ell_2$-norm regularizer.
\begin{theorem}\as\bs
  \label{theorem:regret_IOL}
  Let each loss function $\ell_t$ be $L$-Lipschitz continuous,
  diameter of set $\Ccal$ be $D$, and let $\Dcal_R$ be a Bregman
  divergence with $R$ being the strongly convex generating
  function. Also, let $R$ be a positive function. Then, the regret
  incurred by the Implicit Online Learning (IOL) algorithm (see
  \eqref{eq:iol}) is bounded by:
$$\as\bs\Rcal_{\text{IOL}}(T)\leq 2\,L\sqrt{2R(\wvec^*)}\sqrt{T}.$$
  Furthermore, if each function $\ell_t$ is $\alpha$-strongly convex
  w.r.t $\Dcal_R$ \ie\ $\Dcal_{\ell_t}(\ub,\vb) \geq \alpha
  \Dcal_R(\ub,\vb)$, $\forall \ub,\vb \in \Ccal$ , then
$$\as\bs\Rcal_{\text{IOL}}(T)\leq \frac{2L^2}{\alpha}(1+\ln T)+\alpha
  R(\wvec^*).$$
\end{theorem}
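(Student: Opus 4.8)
The plan is to instantiate the paper's stability-plus-forward-regret recipe. I would bound the cumulative stability $\Scal_{\text{IOL}}(T)$ and the forward regret $\Fcal\Rcal_{\text{IOL}}(T)$ separately, and then invoke the first inequality of \eqref{eq:bound_regret} in Theorem~\ref{thm:equivalence}, namely $\Rcal_{\text{IOL}}(T)\leq L\,\Scal_{\text{IOL}}(T)+\Fcal\Rcal_{\text{IOL}}(T)$, optimizing over the step sizes $\eta_t$ only at the very end. Throughout I take the natural initialization $\wvec_1=\argmin_{\wvec\in\Ccal}R(\wvec)$, the analogue of the $\wb_1$ used for COMiD.

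For stability I would use only the optimality of $\wvec_{t+1}$ for the IOL objective $F_t(\wvec)=\Dcal_R(\wvec,\wvec_t)+\eta_t\ell_t(\wvec)$ at step $t$, so that $F_t(\wvec_t)\geq F_t(\wvec_{t+1})$. Since $\Dcal_R(\wvec_t,\wvec_t)=0$, this reads $\eta_t(\ell_t(\wvec_t)-\ell_t(\wvec_{t+1}))\geq \Dcal_R(\wvec_{t+1},\wvec_t)$. Bounding the right side below by strong convexity of $R$, namely $\Dcal_R(\wvec_{t+1},\wvec_t)\geq \tfrac12\nbr{\wvec_t-\wvec_{t+1}}^2$, and the left side above by $L$-Lipschitzness, $\ell_t(\wvec_t)-\ell_t(\wvec_{t+1})\leq L\nbr{\wvec_t-\wvec_{t+1}}$, yields $\nbr{\wvec_t-\wvec_{t+1}}\leq 2L\eta_t$ and hence $\Scal_{\text{IOL}}(T)\leq 2L\sum_{t=1}^T\eta_t$.

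The forward-regret bound is the crux. Here I would write the first-order optimality condition for the \emph{constrained} minimizer $\wvec_{t+1}$ as a variational inequality and test it against $\wvec^*$; using $\grad_{\wvec}\Dcal_R(\wvec,\wvec_t)=\grad R(\wvec)-\grad R(\wvec_t)$, this gives $\eta_t\grad\ell_t(\wvec_{t+1})^{\top}(\wvec_{t+1}-\wvec^*)\leq (\grad R(\wvec_{t+1})-\grad R(\wvec_t))^{\top}(\wvec^*-\wvec_{t+1})$. The key algebraic step is the three-point identity for Bregman divergences, which rewrites the right-hand side as $\Dcal_R(\wvec^*,\wvec_t)-\Dcal_R(\wvec^*,\wvec_{t+1})-\Dcal_R(\wvec_{t+1},\wvec_t)$. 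Combined with convexity, $\ell_t(\wvec_{t+1})-\ell_t(\wvec^*)\leq \grad\ell_t(\wvec_{t+1})^{\top}(\wvec_{t+1}-\wvec^*)$, and discarding the nonpositive term $-\Dcal_R(\wvec_{t+1},\wvec_t)$, I obtain a per-step bound that telescopes.

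In the general convex case I take $\eta_t\equiv\eta$; the telescoping collapses to $\Fcal\Rcal_{\text{IOL}}(T)\leq \tfrac1\eta\Dcal_R(\wvec^*,\wvec_1)\leq \tfrac1\eta R(\wvec^*)$, where the last step uses $\wvec_1=\argmin_{\Ccal}R$ (so the linear term in $\Dcal_R(\wvec^*,\wvec_1)$ is nonnegative) and positivity of $R$. Theorem~\ref{thm:equivalence} then gives $\Rcal_{\text{IOL}}(T)\leq 2L^2\eta T+\tfrac{R(\wvec^*)}{\eta}$, and $\eta=\sqrt{R(\wvec^*)/(2L^2T)}$ delivers $2L\sqrt{2R(\wvec^*)}\sqrt{T}$. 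For the strongly convex case I would sharpen the convexity step using the hypothesis $\Dcal_{\ell_t}(\wvec^*,\wvec_{t+1})\geq\alpha\Dcal_R(\wvec^*,\wvec_{t+1})$, which subtracts an extra $\alpha\Dcal_R(\wvec^*,\wvec_{t+1})$ from each summand; summing produces exactly $\Fcal\Rcal_{\text{IOL}}(T)\leq\tfrac{1}{\eta_1}\Dcal_R(\wvec^*,\wvec_1)+\sum_{t=2}^T\big(\tfrac{1}{\eta_t}-\tfrac{1}{\eta_{t-1}}-\alpha\big)\Dcal_R(\wvec^*,\wvec_t)$. Choosing $\eta_t=1/(t\alpha)$ makes every summand in the sum vanish and forces $\tfrac{1}{\eta_1}=\alpha$, giving $\Fcal\Rcal_{\text{IOL}}(T)\leq\alpha R(\wvec^*)$ and $\Scal_{\text{IOL}}(T)\leq\tfrac{2L}{\alpha}(1+\ln T)$, whence Theorem~\ref{thm:equivalence} yields $\tfrac{2L^2}{\alpha}(1+\ln T)+\alpha R(\wvec^*)$. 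The main obstacle I anticipate is handling the constrained minimization correctly: the optimality condition must be used in its variational-inequality form rather than as $\grad F_t=0$, and one must verify the three-point identity so that both the telescoping in the convex case and the cancellation engineered by $\eta_t=1/(t\alpha)$ in the strongly convex case go through cleanly.
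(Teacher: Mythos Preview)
Your proposal is correct and follows essentially the same route as the paper's proof: stability via optimality of $\wvec_{t+1}$ against $\wvec_t$ combined with strong convexity of $R$ and Lipschitzness of $\ell_t$, forward regret via the variational-inequality form of optimality against $\wvec^*$ plus the three-point identity, and then the combination through Theorem~\ref{thm:equivalence} with the same step-size choices $\eta_t\equiv\eta$ and $\eta_t=1/(\alpha t)$. The paper also uses $\wvec_1=\argmin_{\Ccal}R$ and the bound $\Dcal_R(\wvec^*,\wvec_1)\leq R(\wvec^*)$ exactly as you describe.
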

\begin{proof}
    Here again, we follow the recipe of proving stability and forward regret. \\
    {\bf Stability}: Stability again follows easily by using optimality
    of $\wvec_{t+1}$ and comparing it to $\wvec_t$. Formally,
    \begin{align}\as\bs
      \eta_t\ell_t(\wvec_t) &\geq \Dcal_R(\wvec_{t+1}, \wvec_{t})+\eta_t \ell_t(\wvec_{t+1}),\nonumber\\
      \eta_t\ell_t(\wvec_t)&\geq \frac{1}{2}\|\wvec_{t+1}- \wvec_{t}\|^2+\eta_t \ell_t(\wvec_{t+1}),\nonumber\\
      2L\eta_t&\geq \|\wvec_{t+1}- \wvec_{t}\|,
      \label{eq:iol_st1}
    \end{align}
    where the first inequality follows by the strong convexity of $R$ and
    the last one follows by using Lipschitz continuity and canceling
    $\|\wvec_{t+1}- \wvec_{t}\|$ from both sides.  Hence,
\begin{equation}\as\bs
  \label{eq:iol_stab}
  \Scal_{\text{IOL}}(T)\leq 2L\sum_{t=1}^T \eta_t. 
\end{equation}

{\bf Forward Regret}: Similarly, forward regret follows by using
optimality of $\wvec_{t+1}$ and comparing it to $\wvec^*$. Formally,
\begin{align}\as\bs
  &(\wvec^*-\wvec_{t+1})^{\top}(\eta_t\grad\ell_t(\wvec_{t+1})+\grad R(\wvec_{t+1})-\grad R(\wvec_t))\geq 0,\nonumber\\\as\bs
  &(\wvec^*-\wvec_{t+1})^{\top}(\grad R(\wvec_{t+1})-\grad R(\wvec_t))\geq\nonumber\\\as\bs
  &\qquad\qquad\qquad\qquad\qquad \eta_t\grad\ell_t(\wvec_{t+1})^{\top}(\wvec_{t+1}-\wvec^*),\nonumber\\\as\bs
  &\Dcal_R(\wvec^*,\wvec_t)-\Dcal_R(\wvec^*,\wvec_{t+1})-\Dcal_R(\wvec_{t+1}, \wvec_t)\geq\nonumber\\\as\bs
  &\qquad\qquad\qquad\qquad \eta_t\grad\ell_t(\wvec_{t+1})^{\top}(\wvec_{t+1}-\wvec^*). 
  \label{eq:iol_fw1}
\end{align}
where \eqref{eq:iol_fw1} follows from the previous step using the
three point inequality \cite{Rak09}. Now, if $\ell_t$ is
$\alpha$-strongly convex w.r.t $\Dcal_R(\cdot,\cdot)$, then,
\begin{equation}\as\bs
  \label{eq:iol_fw2}
  \ell_t(\wvec_{t+1})^{\top}(\wvec_{t+1}-\wvec^*)\geq \ell_t(\wvec_{t+1})-\ell_t(\wvec^*)+\alpha \Dcal_R(\wvec^*,\wvec_{t+1}). 
\end{equation}

Note that strong convexity w.r.t. $\Dcal_R$ is a stronger condition
than the usual strong convexity w.r.t $\ell_2$ norm. Also, for the
first part of the theorem, we can assume $\alpha=0$.

Using \eqref{eq:iol_fw1} and \eqref{eq:iol_fw2}, and
adding over all $T$ steps,
\begin{multline}\as\bs
  \Fcal\Rcal_{\text{IOL}}(T)=\sum_{t=1}^T\ell_t(\wvec_{t+1})-\ell_t(\wvec^*) \leq \frac{1}{\eta_1}\Dcal_R(\wvec^*,\wvec_1)
  + \sum_{t=2}^T \left(\frac{1}{\eta_{t}}-\frac{1}{\eta_{t-1}}-\alpha\right)\Dcal_R(\wvec^*,\wvec_{t}). \as\bs
  \label{eq:iol_fr}
\end{multline}
Hence, using Theorem~\ref{thm:equivalence} with \eqref{eq:iol_stab} and \eqref{eq:iol_fr}, 
\begin{multline}\as\bs
  \label{eq:iol_r}
  \Rcal_{\text{IOL}}(T)\leq 2L^2\sum_{t=1}^T\eta_t + \frac{1}{\eta_1}\Dcal_R(\wvec^*,\wvec_1)
  + \sum_{t=2}^T \left(\frac{1}{\eta_{t}}-\frac{1}{\eta_{t-1}}-\alpha\right)\Dcal_R(\wvec^*,\wvec_{t})\as\bs
\end{multline}
Now, let us first consider the case when $\alpha=0$, \ie, when
functions $\ell_t$ need not be strongly convex. In this case,
selecting each $\eta_t=\eta$ and $\wvec_1=\argmin_{\wvec\in \Ccal}
R(\wvec)$, we can use the optimality of $\wb_1$ to claim $\grad
R(\wb_1)^{\top}(\wb^* - \wb_1) \geq 0$. Coupling this with the
non-negativity of $R$, we get $\Dcal_R(\wb^*,\wb_1) \leq R(\wb^*)$. This
gives:
\begin{equation}\as\bs
  \label{eq:iol_r1}
\hspace*{-20pt}  \Rcal_{\text{IOL}}(T)\leq 2\eta L^2T+\frac{1}{\eta}\Dcal_R(\wvec^*,\wvec_1)\leq 2\,L\sqrt{2R(\wvec^*)T}
\end{equation}
by optimizing over the choice of $\eta$.
Next, for the case when $\alpha>0$, selecting $\eta_t=\frac{1}{\alpha
  t}$ and $\wvec_1=\argmin_{\wvec\in \Ccal} R(\wvec)$,
\begin{equation}\as\bs
  \label{eq:iol_r2}
  \Rcal_{\text{IOL}}(T)\leq \frac{2L^2}{\alpha}(1+\ln T)+\alpha R(\wvec^*). 
\end{equation}
Hence proved.
\end{proof}



\section{Analysis of approximate online algorithms} 
\label{sec:approximate}
We analyze approximate versions of online algorithms where the updates
at every step are not the exact minimizer of the corresponding
objective but approximate ones. In particular, the updates minimize
the objective upto an additive error $\delta_t$ at iteration $t$ as
would be commonly obtained by some iterative optimization method. We
show that even with such approximate updates we can obtain sublinear
regret over $T$ steps for Regularized Dual Averaging (RDA)
\cite{Xiao10}, FTRL as well as IOL. 

Although RDA requires solving an optimization problem at every
step, it is successful in maintaining the sparsity of the intermediate
iterates and thus finds use in a host of applications where sparsity
is essential \cite{Xiao10}. 
 However, it is typically impossible to solve an optimization problem
 exactly at every step. Hence, it is interesting to analyze the
 behaviour of RDA under such approximate updates.
\subsection{Approximate RDA}
\label{sec:approx_RDA}
The exact updates of the original RDA algorithm are given by 
\[
\hspace*{-20pt} \text{RDA:}\ \ \wvec_{t+1}^*=\argmin_{\wvec\in\Ccal}
  \sum_{\tau=1}^t \gvec_\tau^{\top} \wvec+t\cdot r(\wvec) + \beta_t
  h(\wvec) \ . \vspace*{-5pt}
\] 
where $\gb_\tau = \grad \ell_\tau(\wb_\tau)$, the gradient of the loss
function at iteration $\tau$, $r$ is a regularization function which
is part of the objective while $h$ is a strongly convex regularizer
added by the algorithm.  Using $\wb_{t+1}$ to denote the approximate
update in this case we have
\begin{align}
  \label{eq:approx_rda_inequality}
  \sum_{\tau=1}^t \gvec_\tau^{\top} \wvec_{t+1}+t\cdot r(\wvec_{t+1})
  + \beta_t h(\wvec_{t+1}) \leq \sum_{\tau=1}^t \gvec_\tau^{\top}
  \wvec_{t+1}^*+t\cdot r(\wvec_{t+1}^*) + \beta_t h(\wvec_{t+1}^*) +
  \delta_t
  \vspace*{-5pt}
\end{align}
The following theorem bounds the regret for approximate RDA. 
\begin{theorem}
  \label{thm:RDA_lip_approx}
  Let each loss function $\ell_t$ be $L$-Lipschitz continuous and
  \Wlog\ $\min_{\wvec\in \Ccal}r(\wvec)=0$ and $0 \leq h(\wb) \leq
  D^2$, $\forall \wb \in \Ccal$. Now, using $\beta_t=\sqrt{t}$ and
  $\delta_t = O(1/\sqrt{t})$ at each step, the regret of approximate RDA
  is bounded by $O(\sqrt{T})$.
\end{theorem}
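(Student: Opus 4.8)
The plan is to reuse the stability/forward-regret recipe of the exact analysis (Theorem~\ref{thm:RDA_lip}), but to track the additive error $\delta_t$ through both halves. Throughout, write $F_t(\wvec)=\sum_{\tau=1}^t \gvec_\tau^{\top}\wvec + t\,r(\wvec)+\beta_t h(\wvec)$ for the step-$t$ objective (with $\beta_t=\sqrt t$), let $\wvec_t^*$ (resp.\ $\wvec_{t+1}^*$) denote the exact minimizer of $F_{t-1}$ (resp.\ $F_t$) as in \eqref{eq:approx_rda_inequality}, and recall $\wvec_{t+1}$ satisfies $F_t(\wvec_{t+1})\le F_t(\wvec_{t+1}^*)+\delta_t$. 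Taking $h$ to be $1$-strongly convex as in Theorem~\ref{thm:RDA_lip}, $F_t$ is $\beta_t$-strongly convex, so Lemma~\ref{lem:sc_property} applies at its exact minimizer. The single new ingredient used everywhere is that strong convexity converts the value-gap into an iterate-gap: $\frac{\beta_t}{2}\nbr{\wvec_{t+1}-\wvec_{t+1}^*}^2\le \delta_t$, hence $\nbr{\wvec_{t+1}-\wvec_{t+1}^*}\le \sqrt{2\delta_t/\beta_t}=O(1/\sqrt t)$ for the prescribed $\delta_t,\beta_t$.

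For \textbf{stability}, I cannot apply Lemma~\ref{lem:sc_property} directly to $\wvec_{t+1}$ (it is not an exact minimizer), so I would insert the exact minimizers as intermediaries and use the triangle inequality,
\begin{align*}\as\bs
  \nbr{\wvec_t-\wvec_{t+1}} \le \nbr{\wvec_t-\wvec_t^*} + \nbr{\wvec_t^*-\wvec_{t+1}^*} + \nbr{\wvec_{t+1}^*-\wvec_{t+1}}.
\end{align*}
The middle term is the stability of the \emph{exact} minimizers of $F_{t-1}$ and $F_t$; the argument that produced \eqref{eq:rda_s2} uses only strong convexity of $h$, the bound $\nbr{\gvec_\tau}_*\le L$, and $0\le h\le D^2$, none of which depends on \emph{how} the gradient stream $\{\gvec_\tau\}$ was generated, so it carries over verbatim to give $\nbr{\wvec_t^*-\wvec_{t+1}^*}\le (2L+D)/\sqrt{t-1}$. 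The two outer terms are approximation gaps, each $O(1/\sqrt t)$ by the observation above. Summing the three contributions over $t$ yields $\Scal_{\text{RDA}}(T)=O(\sqrt T)$, matching \eqref{eq:rda_stab_g}.

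For \textbf{forward regret}, I would replay the telescoping chain \eqref{eq:rda_fw1}--\eqref{eq:rda_fw4}, except that each optimality inequality now holds only up to its error: at the top $F_T(\wvec_{T+1})\le F_T(\wvec^*)+\delta_T$, and at a generic step $F_{t-1}(\wvec_t)\le F_{t-1}(\wvec_{t+1})+\delta_{t-1}$. Chaining these from $t=T$ down to $t=1$ accumulates the slack $\sum_t\delta_t$, so \eqref{eq:rda_fr_g} becomes $\Fcal\Rcal_{\text{RDA}}(T)\le \sqrt T D^2 + \sum_t\delta_t$, which stays $O(\sqrt T)$ since $\sum_{t=1}^T O(1/\sqrt t)=O(\sqrt T)$. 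I would then combine the two estimates exactly as in Theorem~\ref{thm:RDA_lip}: writing the linearized composite loss $\tilde{\ell}_t(\wvec)=\gvec_t^{\top}\wvec+r(\wvec)$, the decomposition behind Theorem~\ref{thm:equivalence} gives $\Rcal(T)\le \sum_t \gvec_t^{\top}(\wvec_t-\wvec_{t+1}) + \big(r(\wvec_1)-r(\wvec_{T+1})\big) + \Fcal\Rcal_{\text{RDA}}(T)$. Here $\nbr{\gvec_t}_*\le L$ bounds the first sum by $L\,\Scal_{\text{RDA}}(T)$, the $r$-terms telescope and contribute only a constant (using $r\ge 0$ and $\min_{\Ccal}r=0$), and convexity of $\ell_t$ (so $\ell_t(\wvec_t)-\ell_t(\wvec^*)\le \gvec_t^{\top}(\wvec_t-\wvec^*)$) transfers the bound to the true composite regret. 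All three pieces are $O(\sqrt T)$, giving the claim.

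The main obstacle is the stability step. Unlike the forward-regret chain, which degrades gracefully because optimality is only ever used as an inequality, the stability argument of Theorem~\ref{thm:RDA_lip} crucially invokes Lemma~\ref{lem:sc_property} \emph{at an exact minimizer}, which the approximate iterate is not. The triangle-inequality decomposition is what rescues it, and the point requiring care is the claim that the exact-minimizer bound \eqref{eq:rda_s2} is genuinely agnostic to the provenance of the gradients $\gvec_\tau$: this holds because $\ell_\tau$ is $L$-Lipschitz, so $\nbr{\gvec_\tau}_*\le L$ wherever it is evaluated, and hence the bound may be reused unchanged for the perturbed gradient stream produced by the approximate algorithm.
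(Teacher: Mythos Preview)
Your proposal is correct and follows essentially the same approach as the paper: both proofs insert the exact minimizers $\wvec_t^*,\wvec_{t+1}^*$ via the triangle inequality for stability (reusing \eqref{eq:rda_s2} for the middle term and the strong-convexity bound $\nbr{\wvec_{t+1}-\wvec_{t+1}^*}\le\sqrt{2\delta_t/\beta_t}$ for the outer ones), and both replay the telescoping chain \eqref{eq:rda_fw1}--\eqref{eq:rda_fw4} with an accumulated $\sum_t\delta_t$ slack for forward regret. You are more explicit than the paper on two points---that \eqref{eq:rda_s2} depends only on $\nbr{\gvec_\tau}_*\le L$ and hence transfers to the perturbed gradient stream, and that the $r$-terms telescope rather than requiring Lipschitzness of $r$---but these are clarifications of the same argument rather than a different route.
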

\begin{proof}
{\bf Stability:} Using \eqref{eq:rda_s2}, we know that 
\begin{align*}
  \|\wvec_t^*-\wvec_{t+1}^*\|\leq \frac{2L+D}{\sqrt{t-1}}.
\end{align*}
Using the triangle inequality we can bound the gap between the successive 
iterates of the approximate algorithm as 
\begin{align*}
  \|\wvec_t-\wvec_{t+1}\|\leq \nbr{\wb_t - \wb_t^*} + \nbr{\wb_t^* -
    \wb_{t+1}^*} + \nbr{\wb_{t+1} - \wb_{t+1}^*}.
\end{align*}
Using $f_t(\wb) = \gvec_\tau^{\top} \wvec +t\cdot r(\wvec) + \beta_t
h(\wvec)$ we note that the function $f_t$ is $\beta_t\sigma_h$ strongly
convex where $\sigma_h$ is the coefficient of strong convexity of
$h$. Using the optimality of $\wb_{t+1}^*$ we know that
\begin{align*}
  f_t(\wb_t+1) \geq f_t(\wb_{t+1}^*) + \frac{\beta_t\sigma_h}{2}\nbr{\wb_{t+1} -
    \wb_{t+1}^*}^2 
\end{align*} 
Coupling this with \eqref{eq:approx_rda_inequality}, we have 
\begin{align*}
  \nbr{\wb_{t+1} - \wb_{t+1}^*} \leq \sqrt{\frac{2\delta_t}{\beta_t\sigma_h}}
\end{align*}
Similarly we have $\nbr{\wb_{t} - \wb_{t}^*} \leq
\sqrt{\frac{2\delta_{t-1}}{\beta_{t-1}\sigma_h}}$. Combining these we get 
\begin{align*}
  \|\wvec_t-\wvec_{t+1}\|\leq \frac{2L+D}{\sqrt{t-1}} +
  \sqrt{\frac{2\delta_t}{\beta_t\sigma_h}} +
  \sqrt{\frac{2\delta_{t-1}}{\beta_{t-1}\sigma_h}}
\end{align*}
This gives a bound on the stability 
\begin{align*}
  \Scal(T) = \sum_t\nbr{\wb_t - \wb_{t+1}} \leq \sum_t
  \frac{2L+D}{\sqrt{t-1}} + \sum_t2\sqrt{\frac{2\delta_{t-1}}{\beta_{t-1}\sigma_h}}
\end{align*}

{\bf Forward Regret:} We have 
\begin{align*}
  \sum_{\tau=1}^T\gvec_\tau^{\top} \wvec^*+T\cdot r(\wvec^*) + \beta_T
  h(\wvec^*) &\geq \sum_{\tau =1}^T \gvec_\tau^{\top} \wvec_{T+1}^*+T\cdot
  r(\wvec_{T+1}^*) + \beta_T h(\wvec_{T+1}^*)\\
  &\geq \sum_{\tau=1}^T \gvec_\tau^{\top} \wvec_{T+1}+T\cdot
  r(\wvec_{T+1}) + \beta_T h(\wvec_{T+1}) -\delta_T
\end{align*}
Writing up this inequality for all values of $t$ we have 
\begin{align*}
  \sum_{\tau=1}^T\gvec_\tau^{\top} \wvec^*+T\cdot r(\wvec^*) + \beta_T
  h(\wvec^*) \geq \sum_{\tau=1}^T\rbr{\gvec_\tau^{\top} \wvec_{t+1} +
    r(\wvec_{t+1})} + \sum_t \rbr{\beta_t - \beta_{t-1}}h(\wb_{t+1}) -
  \sum_t \delta_t
\end{align*}
Appropriate simplification and using the fact that $\beta_t\geq 0,
\forall t$ and $0 \leq h(\wb) \leq D^2, \forall \wb$ we have 
\begin{align*}
  \Fcal\Rcal(T) \leq \beta_T h(\wb^*) + \sum_t \delta_t \leq \sqrt{T}D^2 + \sum_t\delta_t
\end{align*}
using the fact that $\beta_t = \sqrt{t}$ . 
Thus the regret bound is given by
\begin{align*}
  \Rcal(T) \leq \sum_t \frac{L(2L+D)}{\sqrt{t-1}} + 2L \sum_t
  2\sqrt{\frac{2\delta_{t-1}}{\beta_{t-1}\sigma_h}} + \sqrt{T}D^2 + \sum_t \delta_t
\end{align*}
Using $\delta_t = O(1/\sqrt{t})$ we have that the second term on the
RHS is bounded by $O(T^{1/2})$ while all the other terms are bounded
by $O(T^{1/2})$ which gives the following sublinear regret bound of
$R_T \leq O(T^{1/2})$. 
\end{proof}

\label{sec:app_approx}
\subsection {Approximate FTRL}
Recall the original FTRL algorithm 
\begin{align*}
  \wb_{t+1}^* = \argmin_{\wb \in \Xcal} \sum_{\tau=1}^t \eta_{\tau}l_{\tau}(\wb) + R(\wb)
\end{align*}
where $R$ is the (possibly strongly convex) regularizer. Our algorithm
chooses $\wb_{t+1}$ such that 
\begin{align}
  \label{eq:approx_ftrl_update}
  \sum_{\tau=1}^t \eta_{\tau}l_{\tau}(\wb_{t+1}) + R(\wb_{t+1}) \leq
  \sum_{\tau=1}^t \eta_{\tau}l_{\tau}(\wb_{t+1}^*) + R(\wb_{t+1}^*) +
  \delta_{t+1}
\end{align}
For notational convenience we use the following notation. 
\begin{align*}
  S_t(\wb) = \sum_{\tau=1}^t \eta_{\tau}l_{\tau}(\wb) + R(\wb)
\end{align*}
Since $R$ is strongly convex in $\wb$, $S_t$ is also strongly convex and satisfies 
\begin{align*}
  S_t(\wb_{t+1}) \geq S_t(\wb_{t+1}^*) +
  \inner{S_t(\wb_{t+1}^*)}{\wb_{t+1} - \wb_{t+1}^*} +
  \frac{1}{2}\nbr{\wb_{t+1} - \wb_{t+1}^*}^2
\end{align*}
But $S_t(\wb_{t+1}) \leq S_t(\wb_{t+1}^*) + \delta_t$. Thus
\begin{align*}
  \delta_t \geq \frac{1}{2}\nbr{\wb_{t+1} - \wb_{t+1}^*}^2 \quad
  \implies \quad \nbr{\wb_{t+1}-\wb_{t+1}^*} \leq \sqrt{2 \delta_t}
\end{align*}

{\bf Stability:} Using the standard stability bound of FTRL and
assuming $\eta_t = \eta$ for all $t$, we have
\begin{align*}
  \nbr{\wb_t - \wb_{t+1}} &\leq \nbr{\wb_t - \wb_t^*} + \nbr{\wb_t^* -
    \wb_{t+1}^*} + \nbr{\wb_{t+1}^* - \wb_{t+1}} \\
  &\leq \sqrt{2\delta_t} + L \eta + \sqrt{2\delta_{t+1}} 
\end{align*}
Thus
\begin{align}
  \nonumber 
  \sum_{t=1}^T\nbr{\wb_t - \wb_{t+1}} &\leq LT\eta +
  \sum_{t=1}^T\sbr{\sqrt{2\delta_t} + \sqrt{2\delta_{t+1}}}\\ 
  \label{eq:stability_FTRL}
  &\leq LT\eta + \sum_{t=1}^T\sbr{2\sqrt{2\delta_t}}
\end{align}
where the last step follows by assuming that $\delta_t$ is a strictly
decreasing sequence in $t$.

{\bf Forward Regret:} We have
\begin{align*}
\sum_{\tau=1}^t l_{\tau}(\wb_{t+1}^*) + \frac{1}{\eta}R(\wb_{t+1}^*)
\leq \sum_{\tau=1}^t l_{\tau}(\wb^*) + \frac{1}{\eta}R(\wb^*)
\end{align*}
Using \eqref{eq:approx_ftrl_update} and telescoping we get
\begin{align*}
\sum_{t=1}^T l_t(\wb_{t+1}) - l_t(\wb^*) \leq
\frac{1}{\eta}\rbr{R(\wb^*) - R(\wb_1)} + \sum_t\frac{\delta_t}{\eta}
\end{align*}
Using the convexity of $R$ and Cauchy Schwartz inequality we have 
\begin{align*}
  R(\wb^*) - R(\wb_1) \leq \inner{\grad R(\wb^*)}{\wb_1 - \wb^*} \leq
  \nbr{\grad R}\nbr{\wb^* - \wb_1} \leq GD
\end{align*}
Thus $\Fcal\Rcal(T) \leq \frac{GD}{\eta} + \sum_t \frac{\delta_t}{\eta}$.
Using the stability theory we have 
\begin{align*}
  \Rcal(T) \leq L \Scal(T) + \Fcal\Rcal(T) 
\end{align*}
Choosing $\eta = \frac{1}{\sqrt{T}}$ and $\delta_t = \delta = 
\frac{1}{T}$, we get 
\begin{align*}
  R_T &\leq GD\sqrt{T} + L^2 \sqrt{T} +
  \sum_{t=1}^T\sbr{\frac{\delta}{\eta} + 2\sqrt{2\delta}} \\ 
  &\leq GD\sqrt{T} + L^2 \sqrt{T} + \frac{T
    \delta^{3/4}}{\eta^{1/2}} \\
\end{align*}
Using $\delta^{3/4} = O(T^{-3/4})$ we get that $R_T =
O(T^{1/2})$. Note that the last line uses the AM-GM inequality which
is only attained at equality that justifies the values of $\eta$ and
$\delta$.

\subsection{Approximate IOL}
The updates of the original IOL algorithm are given by 
\begin{align}
  \label{eq:iol_update}
  \wb_{t+1}^* = \argmin_{\wb \in \Xcal}\sbr{ \eta_t\ell_t(\wb) +
    \frac{1}{2}\nbr{\wb - \wb_t}^2}
\end{align}
We use $f_t(\wb)$ to denote $\eta_t\ell_t(\wb) + \frac{1}{2}\nbr{\wb -
  \wb_t}^2$ in the sequel.  Similar to the FTRL case, we assume that
  $\wb_{t+1}$ is a $\delta_t$ approximate solution. Thus
\begin{align}
  \label{eq:approx_iol_cond}
  \eta_t \ell_t(\wb_{t+1}) + \frac{1}{2}\nbr{\wb_{t+1} - \wb_t}^2
    \leq \eta_t \ell_t(\wb_{t+1}^*) + \frac{1}{2}\nbr{\wb_{t+1}^* -
      \wb_t}^2 + \delta_t
\end{align}
Since $\wb_{t+1}^*$ is optimal we have 
\begin{align*} 
  \inner{\grad f_t(\wb_{t+1}^*)}{\wb_{t+1} - \wb_{t+1}^*} \geq 0 
\end{align*}
Using the optimality of $\wb_{t+1}^*$ and the strong convexity of
$f_t$ we have
\begin{align*}
  \eta_t \ell_t(\wb_{t+1}^*) + \frac{1}{2}\nbr{\wb_{t+1}^* - \wb_t}^2
  + \frac{1}{2}\nbr{\wb_{t+1}^* - \wb_{t+1}}^2 &\leq \eta_t
  \ell_t(\wb_{t+1}) + \frac{1}{2}\nbr{\wb_{t+1} - \wb_t}^2\\ 
    &\leq \eta_t \ell_t(\wb_{t+1}^*) + \frac{1}{2}\nbr{\wb_{t+1}^* -
      \wb_t}^2 + \delta_t
\end{align*}
Simplifying we get
\begin{align}
  \label{eq:iol_cond1}
  \nbr{\wb_{t+1} - \wb_{t+1}^*} \leq \sqrt{2\delta_t}
\end{align}
{\bf Forward Regret:} Denoting $\wb^*$ as the minimizer after $T$
steps we have using optimality of $\wb_{t+1}^*$ and the strong
convexity of $f_t$,
\begin{align*}
  \eta_t \ell_t(\wb_{t+1}^*) + \frac{1}{2}\nbr{\wb_{t+1}^* - \wb_t}^2
  + \frac{1}{2}\nbr{\wb_{t+1}^* - \wb^*}^2 \leq \eta_t \ell_t(\wb^*) +
  \frac{1}{2}\nbr{\wb^* - \wb_t}^2
\end{align*}
Now 
\begin{align*}
  \nbr{\wb^* - \wb^*_{t+1}}^2 &= \nbr{\wb^* - \wb_{t+1} + \wb_{t+1}
    -\wb_{t+1}^*}^2 \\ 
  &\geq \nbr{\wb^* - \wb_{t+1}}^2 + \nbr{\wb_{t+1} - \wb_{t+1}^*}^2
  -2\nbr{\wb^* - \wb_{t+1}}_2\nbr{\wb_{t+1} - \wb_{t+1}^*}_2
\end{align*}
Using the fact that $\nbr{\wb_{t+1} - \wb^*}_2 \leq D$, the diameter
of the set, we get
\begin{align}
  \label{eq:iol_cond2}
  \nbr{\wb^* - \wb_{t+1}^*}^2 \geq \nbr{\wb^* - \wb_{t+1}}^2 -
  2D\sqrt{\delta_t}
\end{align}
Combining \eqref{eq:iol_cond1} and \eqref{eq:iol_cond2} we get 
\begin{align*}
  \eta_t \ell_t(\wb_{t+1}^*) + \frac{1}{2}\nbr{\wb_{t+1}^* - \wb_t}^2
  + \frac{1}{2}\nbr{\wb^* - \wb_{t+1}}^2 \leq \eta_t \ell_t(\wb^*) +
  \frac{1}{2}\nbr{\wb^* - \wb_t}^2 + D\sqrt{2\delta_t}
\end{align*} 
Using \eqref{eq:approx_iol_cond} we have 
\begin{align*}
  \eta_t \ell_t(\wb_{t+1}) + \frac{1}{2}\nbr{\wb_{t+1} - \wb_t}^2
  + \frac{1}{2}\nbr{\wb^* - \wb_{t+1}}^2 \leq \eta_t \ell_t(\wb^*) +
  \frac{1}{2}\nbr{\wb^* - \wb_t}^2 + \delta_t + D\sqrt{2\delta_t}
\end{align*}
This can be rewritten as 
\begin{align*}
  \eta_t \ell_t(\wb_{t+1}) \leq \eta_t \ell_t(\wb^*) +
  \frac{1}{2}\sbr{\nbr{\wb^* - \wb_t}^2 - \nbr{\wb^* -\wb_{t+1}}^2} -
  \nbr{\wb_{t+1} - \wb_t}^2 + \delta_t + D\sqrt{2\delta_t}
\end{align*}
Adding up the above inequality for $t=1\hdots T$ and assuming $\eta_t
= \eta$ we note that some of the terms on the RHS cancel out by
telescoping. Using the fact that $\nbr{\wb^* - \wb_1} \leq D$ this
gives
\begin{align*}
  \sum_t \ell_t(\wb_{t+1}) \leq \sum_t \ell_t(\wb^*) +
  \frac{D^2}{2\eta} + \frac{\sum_t \delta_t}{\eta} +
  \frac{D \sqrt{2 \delta_t}}{\eta}
\end{align*}
Thus we have forward regret 
\begin{align} 
  \label{eq:fr_iol_approx}
  \Fcal\Rcal(T) \leq \frac{D^2}{2\eta} + \frac{\sum_t \delta_t}{\eta} +
  \frac{D \sqrt{2 \delta_t}}{\eta}
\end{align}

{\bf Stability:} Using the strong convexity of the objective we have
\begin{align*}
\eta_t \ell_t(\wb_{t+1}^*) + \frac{1}{2}\nbr{\wb_{t+1}^* - \wb_t}^2
  + \frac{1}{2}\nbr{\wb_{t+1}* - \wb_t}^2 \leq \eta_t \ell_t(\wb_t)
\end{align*}
Using the fact that $\ell_t$ is $L-$ lipschitz continuous we have 
\begin{align*}
  \nbr{\wb_{t+1}^* - \wb_t} \leq L\eta_t 
\end{align*}
Using \eqref{eq:iol_cond1} and the triangle inequality we get
\begin{align} 
  \label{eq:approx_stability_iol}
  \nbr{\wb_{t+1} - \wb_t} \leq L \eta_t + \sqrt{2 \delta_t} 
\end{align}
Combining stability and forward regret we get
\begin{align*}
  \Rcal(T) &\leq L \Scal(T) + \Fcal\Rcal(T) = \frac{D^2}{2\eta} + \frac{\sum_t
    \delta_t}{\eta} + \frac{D \sqrt{2 \delta_t}}{\eta} + L^2 \eta T +
  L \sum_t \sqrt{2\delta_t}
\end{align*} 
Using the fact that $\delta_t \leq \sqrt{\delta_t}$ 
we have
\begin{align}
  \nonumber
  R_T &\leq \frac{D}{2\eta} + \frac{\sum_t\sqrt{2\delta_t}}{\eta} +
  \frac{D\sqrt{2\delta_t}}{\eta} + 2L^2T\eta +
  L\sum_t\sqrt{2\delta_t}\\
  \label{eq:approx_iol_penultimate}
  &\leq 2\rbr{2L^2 T(D+ \sum_t\sqrt{\delta_t} + D \sqrt{2\delta_t})}^{1/2}
  + L\sum_t \sqrt{2\delta_t}
\end{align}
Setting $\delta_t = 1/t$ we have
$\sum_t\sqrt{\delta_t} = O(T^{1/2})$. Replacing it in
\eqref{eq:approx_iol_penultimate}, we have
\begin{align*} 
  R_T \leq O(2LT^{3/4}) + O(LT^{1/2}) = O(L\sqrt{D}T^{3/4})
\end{align*}
thus giving sublinear regret for the IOL algorithm.

On the other hand, setting $\delta_t = 1/t^2$ gives
$\sum_t \sqrt{\delta_t} = O(\log T)$. Replacing this in
\eqref{eq:approx_iol_penultimate}, we get
\begin{align*} 
  R_T \leq \tilde{O}(2L T^{1/2}) + \tilde{O}(L) =
  \tilde{O}(L\sqrt{D}T^{1/2})
\end{align*}
where $\tilde{O}$ hides logarithmic factors in $T$. 

While we provide rates on $\delta_t$ for getting regret bounds akin to
the exact optimization model for the various optimization algorithms
we should forewarn the readers that each of these algorithms optimize
potentially different objectives and therefore comparing the values of
$\delta_t$ directly would be misrepresentative. The main purpose of
the approximate analysis is to illustrate that there exist precision
accuracies so that if an optimization oracle optimizes the objectives
at every iteration to such precision, the resulting regret bounds are
of the same order as the theoretical exact computation setting.





\section{Conclusion}
\label{sec:conclusion}
Recent research~\cite{RossB11,PogVoiRos11} has sought to establish
connections between stability and online learnability. In the light of
our work, it becomes evident that online stability is a crucial
concept in online learning. It is not only related to the ability to
minimize regret but also provides us with a straightforward recipe to
analyze regret for most existing online learning algorithms via a
remarkably simplified analysis.


It will be interesting to see to what extent this result extends to
arbitrary non-convex sets. Finally, stability based proofs for regret
bounds of algorithms such as FTRL, IOL and RDA easily extend to the
case where the optimization problem arising at every step of these
algorithms is only solved approximately. This opens up many avenues
for further exploration. Can we compare algorithms based on the
trade-offs they offer between low regret and small amount of
computation per step?  Like regularization and random perturbations,
can approximate computation itself serve as the source of stability in
online learning algorithms?

In contrast to the iid setting, there is unfortunately still a
significant gap in our understanding the role of stability for online
learning. The biggest shortcoming of existing work is that most of the
stability based analysis (including ours) in online learning is still
based on analyzing stability of \emph{algorithms}.  A connection of
stability with the online learnability of the underlying concept class
is still missing.
In contrast, \cite{ShaShaSreSri10} provides a generic equivalence
between the existence of a stable AERM and the learnability of a
concept class in the generic batch setting. We think that a major
reason behind this is the absence of a canonical scheme like Empirical
Risk Minimization which can
characterize online learnability for all concept classes.  While our
definition of online stability provides a new way of looking at
online regret, it is still an open problem to understand stability
and online learnability \cite{RakhlinST10} fundamentally in a manner
akin to the batch learning framework.


\clearpage
\bibliographystyle{plain}
{\small
\bibliography{refs}}

\newpage

\end{document}